\documentclass{article}
\pdfoutput=1

\usepackage[utf8]{inputenc}
\usepackage[margin=30mm]{geometry}
\usepackage{parskip}
\usepackage{etoolbox}

\usepackage{amsmath}
\usepackage{amssymb}
\usepackage{amsthm}

\usepackage{newpxtext,newpxmath}

\usepackage[noend]{algpseudocode}
\usepackage{algorithm}

\usepackage{hyperref}
\usepackage[capitalize]{cleveref}
\usepackage[numbers]{natbib}

\newtheorem{remark}{Remark}[section]
\newtheorem{defn}{Definition}[section]
\newtheorem{lemma}{Lemma}[section]
\newtheorem{theorem}{Theorem}[section]
\newtheorem{problem}{Optimisation Problem}[section]

\usepackage{adjustbox}
\usepackage{pgfplots}
\usepackage{tikz-cd}
\usepackage{tikz}
\usetikzlibrary{shapes.geometric}
\usetikzlibrary{positioning}
\tikzstyle{circ} = [circle, text centered, draw=black]
\pgfplotsset{compat=1.16}

\usepackage{ifthen}
\newboolean{include-notes}
\setboolean{include-notes}{true}
\newcommand{\TODO}[1]{\ifthenelse{\boolean{include-notes}}
 {{\color{red} \textbf{TODO:} #1}}{}}
\newcommand{\Adam}[1]{\ifthenelse{\boolean{include-notes}}
 {{\color{mygreen} \textbf{AG:} #1}}{}}
\newcommand{\Sam}[1]{\ifthenelse{\boolean{include-notes}}
 {{\color{cyan} \textbf{ST:} #1}}{}}

\newcommand\labelAndRemember[2]
  {\expandafter\gdef\csname labeled:#1\endcsname{#2}%
   \label{#1}#2}
\newcommand\recallLabel[1]
   {\csname labeled:#1\endcsname\tag{\ref{#1}}}

\newcounter{termcounter}

\makeatletter
\if@cref@capitalise\crefname{term}{Term}{Terms}\else\crefname{term}{term}{terms}\fi
\creflabelformat{term}{#2{(#1)}#3}
\def\term{\@ifnextchar[\term@optarg\term@noarg}%
\def\term@optarg[#1]#2{%
  {#1}%
  \def\@currentlabel{#1}%
  \def\cref@currentlabel{[][2147483647][]#1}%
  \cref@label[term]{#2}}
\def\term@noarg#1{%
  \refstepcounter{termcounter}%
  {(\alph{termcounter})}%
  \cref@label[term]{#1}}
\makeatother

\newcommand{\expectation}{\mathbb{E}}

\newcommand{\soft}{\mathrm{soft}}
\newcommand{\qsoft}{Q^{\soft}}
\newcommand{\vsoft}{V^{\soft}}
\newcommand{\asoft}{A^{\soft}}

\newcommand{\loglikelihood}{\mathcal{L}}
\newcommand{\loss}{L}
\newcommand{\trajectory}{\tau}
\newcommand{\expertdataset}{\mathcal{D}}
\newcommand{\discriminator}{D}
\newcommand{\discriminatorloss}{L}

\DeclareMathOperator*{\argmax}{arg\,max}

\DeclareMathOperator*{\expect}{{\mathbb E}}

\newcommand{\statespace}{\mathcal{S}}
\newcommand{\actionspace}{\mathcal{A}}
\newcommand{\transitiondist}{\mathcal{T}}
\newcommand{\discount}{\gamma}
\newcommand{\horizon}{T}
\newcommand{\initialstatedist}{\mathcal I}
\newcommand{\reward}{r}
\newcommand{\mdp}{(\statespace, \actionspace, \discount, \horizon, \initialstatedist, \transitiondist, \reward)}

\newcommand{\state}{s}
\newcommand{\action}{a}
\newcommand{\nextstate}{s'}

\newcommand{\policyreturn}{G}
\newcommand{\policy}{\pi}

\makeatletter
\newcommand{\vast}{\bBigg@{3}}
\newcommand{\Vast}{\bBigg@{4}}
\makeatother

\newcommand\giv[1][]{\:#1\vert\:}

\title{A Primer on Maximum Causal Entropy Inverse Reinforcement Learning}
\author{
  Adam Gleave\thanks{Equal contribution.}\\\href{mailto:Adam Gleave <gleave@berkeley.edu>?subject=MCE IRL}{\texttt{gleave@berkeley.edu}}
\and
  Sam Toyer\footnotemark[1]\\\href{mailto:Sam Toyer <sdt@berkeley.edu>?subject=MCE IRL}{\texttt{sdt@berkeley.edu}}
}
\date{}

\makeatletter
\patchcmd{\@maketitle}{\vskip 2em}{\vspace*{-1em}}{}{}  %
\makeatother

\begin{document}

\maketitle

\begin{abstract}
\emph{Inverse Reinforcement Learning} (IRL) algorithms~\cite{ng2000algorithms,abbeel2004apprenticeship} infer a reward function that explains demonstrations provided by an expert acting in the environment.
Maximum Causal Entropy (MCE) IRL~\cite{ziebart2010paper,ziebart2010modeling} is currently the most popular formulation of IRL, with numerous extensions~\cite{finn2016connection,fu2017learning,shah2019preferences}.
In this tutorial, we present a compressed derivation of MCE IRL and the key results from contemporary implementations of MCE IRL algorithms.
We hope this will serve both as an introductory resource for those new to the field, and as a concise reference for those already familiar with these topics.
\end{abstract}

\tableofcontents

\newpage

\section{Introduction}
The most direct approach to automating a task is to manually specify the steps required to complete the task in the form of a \emph{policy}.
However, it is often easier to specify a \emph{reward function} that captures the overarching task objective, and then use reinforcement learning (RL) to obtain a policy that carries out the steps to achieve that objective~\cite{sutton2018rl}.
Unfortunately, procedurally specifying a reward function can also be challenging.
Even a task as simple as peg insertion from pixels has a non-trivial reward function~\citep[Section~IV.A]{vecerik2019peg}.
Most real-world tasks have far more complex reward functions than this, especially when they involve human interaction.

A natural solution is to learn the reward function itself.
A common approach is \emph{Inverse Reinforcement Learning} (IRL)~\cite{ng2000algorithms,abbeel2004apprenticeship}: inferring a reward function from a set of demonstrations of a particular task.
This is well-suited to tasks that humans can easily perform but would find difficult to directly specify the reward function for, such as walking or driving.
An additional benefit is that demonstrations can be cheaply collected at scale: for example, vehicle manufacturers can learn from their customers' driving behaviour~\cite{tesla2016fleet,karpathy2020keynote}.

A key challenge for IRL is that the problem is \emph{underconstrained}: many different reward functions are consistent with the observed expert behaviour~\citep{amin2017,kim2021,cao2021,skalse2022invariance}.
Some of these differences, such as scale or potential shaping, \emph{never} change the optimal policy and so may be ignored~\cite{ng1999policy,gleave2020quantifying}.
However, many differences \emph{do} change the optimal policy---yet perhaps only in states that were never observed in the expert demonstrations.
By contrast, alternative modalities such as actively querying the user for preference comparisons~\cite{sadigh2017active} can avoid this ambiguity, at the cost of a higher cognitive workload for the user.

\emph{Maximum Causal Entropy} (MCE) IRL is a popular framework for IRL.
Introduced by Ziebart~\cite{ziebart2010paper,ziebart2010modeling}, MCE IRL models the demonstrator as maximising return achieved (like an RL algorithm) plus an entropy bonus that rewards randomising between actions.
The entropy bonus allows the algorithm to model suboptimal demonstrator actions as random mistakes.
In particular, it means that \textit{any} set of sampled demonstrations has support under the demonstrator model, even if the trajectories are not perfectly optimal for any non-trivial reward function.
This is important for modelling humans, who frequently deviate from optimality in complex tasks.

An alternative framework, Bayesian IRL~\citep{ramachandran2007birl}, goes beyond finding a point estimate of the reward function and instead infers a \emph{posterior distribution} over reward functions.
It therefore assigns probability mass to \emph{all} reward functions compatible with the demonstrations (so long as they have support in the prior).
Unfortunately, Bayesian IRL is difficult to scale, and has to date only been demonstrated in relatively simple environments such as small, discrete MDPs.

In contrast to Bayesian IRL, algorithms based on MCE IRL have scaled to high-dimensional environments.
Maximum Entropy Deep IRL~\cite{wulfmeier2015maximum} was one of the first extensions, and is able to learn rewards in gridworlds from pixel observations.
More recently, Guided Cost Learning~\cite{finn2016guided} and Adversarial IRL~\cite{fu2017learning} have scaled to MuJoCo continuous control tasks.
Given its popularity and accomplishments we focus on MCE IRL in the remainder of this document; we refer the reader to \citet{jeon2020rrc} for a broader overview of reward learning.

\section{Background}

Before describing the MCE IRL algorithm itself, we first need to introduce some notation and concepts. First, we define \emph{Markov Decision Processes} (MDPs). Then, we outline IRL based on \emph{feature matching}. Finally, we introduce the notion of \emph{causal entropy}.

\subsection{Markov decision processes}

In this tutorial, we consider Markov decision processes that are either discounted, or have a finite horizon, or both. Below we give the definition and notation that we use throughout. Note that when doing IRL, we drop the assumption that the true reward $\reward$ is known, and instead infer it from data.

\begin{defn}
A \emph{Markov Decision Process (MDP)} $M = \mdp$ consists of
a set of states $\statespace$ and a set of actions $\actionspace$;
a discount factor $\discount \in [0,1]$;
a horizon $T \in \mathbb{N} \cup \{\infty\}$;
an initial state distribution $\initialstatedist(\state)$;
a transition distribution $\transitiondist(\nextstate \giv \state,\action)$ specifying the probability of transitioning to $\nextstate{}$ from $\state{}$ after taking action $\action{}$;
and a reward function $\reward(\state, \action, \nextstate)$ specifying the reward upon taking action $\action$ in state $\state$ and transitioning to state $\nextstate$.
It must be true that either the discount factor satisfies $\discount < 1$ or that the horizon is finite ($T < \infty$).
\end{defn}

Given an MDP, we can define a (stochastic) \emph{policy} $\policy_t(\action_t \giv \state_t)$ that assigns a probability to taking action $\action_t \in \actionspace$ in state $\state_t \in \statespace$ at time step $t$.
The probability of a policy acting in the MDP producing a \textit{trajectory fragment} $\trajectory{} = \left(s_0, a_0, s_1, a_1, \ldots, s_{k-1}, a_{k-1}, s_k\right)$ of length $k \in \mathbb{N}$ is given by:
\begin{equation}
\label{eq:trajectory-probability}
p(\trajectory{}) = \initialstatedist(s_0) \prod_{t=0}^{k-1} \transitiondist{}(s_{t+1} \giv s_{t}, a_{t})~\policy{}_t(a_{t} \giv s_t)\,.
\end{equation}
Note in a finite-horizon MDP ($T \in \mathbb{N}$) the policy may be \textit{non-stationary}, i.e.\ it can depend on the time step  $t$.
In the infinite-horizon case ($T = \infty$), the MDP is symmetric over time, and so we assume the policy is stationary.
We drop the subscript and write only $\policy$ when the policy is acting across multiple timesteps or is known to be stationary.

The objective of an agent is to maximise the expected return:
\begin{equation}
\policyreturn(\policy) = \expect_{\policy{}}\left[\sum_{t=0}^{T-1} \discount^t \reward{}(S_t, A_t, S_{t+1})\right]\,.
\end{equation}
An optimal policy $\policy^*$ attains the highest possible expected return: $\policy^* \in \argmax_{\policy} \policyreturn(\policy)$.

\subsection{Imitation as feature expectation matching}

In IRL, our objective is to recover a reward function that---when maximised by a reinforcement learner---will lead to similar behaviour to the demonstrations.
One way to formalise ``similar behaviour'' is by \textit{feature expectation matching}.
Suppose the demonstrator is optimising some unknown linear reward function $r_{\theta_\star}(s_t, a_t) = \theta_\star^T \phi(s_t, a_t)$, where $\phi(s_t, a_t) \in \mathbb R^d$ is some fixed feature mapping.
In this case, the demonstrator's expected return under its behaviour distribution $\expertdataset{}$ will be linear in the expected sum of discounted feature vectors observed by the agent:
\begin{equation}
\expect_{\mathcal \expertdataset{}}\left[\sum_{t=0}^{T-1} \gamma^t r_{\theta_\star}(S_t, A_t)\right]
= \expect_{\expertdataset{}}\left[\sum_{t=0}^{T-1} \gamma^t \theta_\star^T \phi(S_t, A_t) \right]
= \theta_\star^T \expect_{\expertdataset{}}\left[\sum_{t=0}^{T-1} \gamma^t \phi(S_t, A_t) \right]\,.
\end{equation}

Say that we recover some imitation policy $\policy{}$ with identical expected feature counts to the demonstrator:
\begin{equation}
\expect_{\policy{}}\left[\sum_{t=0}^{T-1} \gamma^t \phi(S_t, A_t) \right]
= \expect_{\expertdataset{}}\left[\sum_{t=0}^{T-1} \gamma^{t} \phi(S_t, A_t) \right]\,.
\end{equation}
Because expected reward is linear in the (matched) feature expectations above, the reward obtained by $\policy{}$ under the unknown true reward function $r_{\theta_\star}(s_t, a_t)$ must be the same as the reward obtained by the demonstrator $\expertdataset{}$ under that reward function~\cite{abbeel2004apprenticeship}.
If our imitation policy $\policy{}$ is optimal under reward function parameters $\hat{\theta}$, then it is reasonable to say that $\hat{\theta}$ is an estimate of the demonstrator's true reward parameters.
However, in general there will be many choices of $\hat{\theta}$ that produce imitation policies $\policy{}$ with the same expected feature counts as the demonstrator.
In the next section, we will see how we can apply the \emph{principle of maximum entropy} to break ties between these reward functions.

\subsection{Maximum causal entropy}

The \emph{principle of maximum entropy} holds that when choosing between many probability distributions that are consistent with the data, one should pick the distribution that has \textit{highest entropy}.
Intuitively, such a distribution is the ``most uncertain'' among those that meet the data-fitting constraints.
This principle can also be formally justified with an appeal to games: choosing the maximum entropy distribution minimises one's expected log loss in the setting where an adversary is able to choose the true distribution from those consistent with the data~\cite{topsoe1979information}.

In an IRL setting, this principle leads to a simple and effective algorithm for simultaneously recovering a reward function and corresponding imitation policy from demonstrations.
In particular, in MCE IRL we choose the reward function whose corresponding policy has maximal \emph{causal entropy}:

\begin{defn}\label{defn:causal-ent}
Let $S_{0:T}$ and $A_{0:T}$ be random variables representing states and actions induced by following a policy $\policy$ in an MDP and sampled according to \cref{eq:trajectory-probability}.
Then the causal entropy\footnote{The definition of causal entropy can also be generalised to non-Markovian contexts~\cite[section~4.2]{ziebart2010modeling}.} $H(A_{0:T-1} \| S_{0:T-1})$ is the sum of the entropies of the policy action selection conditioned on the state at that timestep:
\begin{align}
    H(A_{0:T-1} \| S_{0:T-1}) &= \expect_{\policy{}}\left[-\sum_{t=0}^{T-1} \discount^t \log \policy{}_t(A_t \giv S_t)\right]
    = \sum_{t=0}^{T-1} \discount^t H(A_{t} \giv S_t)\,.
\end{align}
\end{defn}

Note the sum in \cref{defn:causal-ent} is discounted, effectively valuing entropy of later actions less.
This is needed for consistency with discounted returns, and for convergence in infinite-horizon problems; see \citet[appendix~A]{haarnoja2017energy} for more information.
We will revisit the subtleties of infinite horizons in \cref{ssec:inf-horizons-suck}.

\begin{remark}
The causal entropy $H(A_{0:T-1} \| S_{0:T-1})$ has the useful property that at each time step, it conditions only on information available to the agent (that is, the current state, as well as prior states and actions).
By contrast, the conditional entropy of actions given states $H(A_{0:T-1} \giv S_{0:T-1})$ conditions on states that arise \emph{after} each action was taken.
Moreover, conventional Shannon entropy $H(S_{0:T-1}, A_{0:T-1})$ calculates the entropy over the entire trajectory distribution, introducing an unwanted dependency on transition dynamics via terms $H(S_t \giv S_{t-1}, A_{t-1})$:
\begin{align}
H(S_{0:T-1}, A_{0:T-1}) &= \sum_{t=0}^{T-1} \discount^t H(S_t, A_t \giv S_{0:t-1}, A_{0:t-1}) & & \text{chain rule} \\
&= \sum_{t=0}^{T-1} \discount^t H(S_t \giv S_{0:t-1}, A_{0:t-1}) + \sum_{t=0}^{T-1} \discount^t H(A_t \giv S_{0:t}, A_{0:t-1}) & & \text{chain rule} \\
&= \sum_{t=0}^{T-1} \discount^t H(S_t \giv S_{t-1}, A_{t-1}) + \sum_{t=0}^{T-1} \discount^t H(A_t \giv S_{t}) & & \text{independence} \\
&= \sum_{t=0}^{T-1} \underbrace{\discount^t H(S_t \giv S_{t-1}, A_{t-1})}_{\text{state transition entropy}} + \underbrace{H(A_{0:T-1} \| S_{0:T-1})}_{\text{causal entropy}}\, & & \text{causal ent. definition}
\end{align}
Maximising Shannon entropy therefore introduces a bias towards taking actions with uncertain (and possibly risky) outcomes.
For this reason, maximum causal entropy should be used rather than maximum (Shannon) entropy in stochastic MDPs.
In deterministic MDPs, $H(S_t \giv S_{t-1}, A_{t-1}) = 0$ and the methods are equivalent (\cref{sec:mce-irl:max-ent}).
\end{remark}

\section{Maximum Causal Entropy (MCE) IRL}\label{sec:mce-irl}

In this section, we start by reviewing two complementary ways of formalising the MCE IRL objective as an optimisation problem.
First, we will consider MCE IRL as a way of solving a particular constrained optimisation problem.
Second, we will describe MCE IRL in terms of maximum likelihood estimation, which will allow us to replace the linear reward function with a non-linear one.
Finally, we will discuss how MCE IRL simplifies to Maximum Entropy (ME) IRL under deterministic dynamics.

\subsection{MCE IRL as feature expectation matching}\label{sec:mce-irl:fem}

MCE IRL was originally introduced by \citet{ziebart2010paper}, who considered the general setting of non-Markovian dynamics and trajectory-centric features.
They then derived simplifications for the special case of Markovian dynamics and policies, as well as feature functions that decompose across state--action transitions.
This primer only considers the simpler case (decomposed features, Markovian policies), and is thus substantially shorter.
Moreover, we will assume throughout that the horizon $T$ is finite, and that the policy is non-stationary (time-dependent); removing these assumptions is discussed in \cref{ssec:inf-horizons-suck}

\begin{problem}[MCE IRL primal problem]\label{prob:mce-irl-primal-orig}
Given an expert data distribution $\expertdataset$, the optimisation problem of finding a time-dependent stochastic policy $\policy{}_t(s_t, a_t)$ that matches feature expectations while maximising causal entropy can be written as
\begin{align}
  \max_{
    \pi \in \mathscr P
  } \quad & H(A_{0:T-1} \| S_{0:T-1}) \\
  \text{Subject to} \quad
  & \expect_{\policy{}}\left[\sum_{t=0}^{T-1} \discount^t \phi(S_t, A_t)\right] = \expect_{\expertdataset{}}\left[\sum_{t=0}^{T-1} \discount^t \phi(S_t, A_t)\right]
\end{align}
where $\mathscr P$ denotes the set of policies where the action distribution at each state falls within the standard policy simplex, so that
\begin{equation}
    \pi \in \mathscr P \iff \policy{}_t(a_t\giv{}s_t) \geq 0 \text{ and } \sum_{a'} \policy_t(a'\giv{}s_t)=1 \quad \left( \forall  0 \leq t < T, \forall a_t \in \actionspace, \forall s_t \in \statespace \right)~,
\end{equation}
and $S_{0:T-1}$ and $A_{0:T-1}$ are random sequences of states and actions, induced by $\policy{}_t$ and sampled according to \cref{eq:trajectory-probability} on the left, and $\expertdataset{}$ on the right.
\end{problem}

This primal problem is not convex in general, but we will nevertheless approach solving it using a standard recipe for convex optimisation problems~\cite[Chap.~5]{boyd2004convex}:
\begin{enumerate}
    \item \textbf{Form the Lagrangian.} First, we will convert the feature expectation matching constraint into a weighted penalty term, producing the Lagrangian for the problem.
    \item \textbf{Derive the dual problem.} Next, we will use the Lagrangian to form a \textit{dual problem} that is equivalent to \cref{prob:mce-irl-primal-orig}.
    The dual problem will introduce an extra set of parameters---called Lagrange multipliers---which in this case will be interpretable as weights of a reward function.
    \item \textbf{Dual ascent.} Finally, we will solve the dual problem with a simple procedure called \textit{dual ascent}, which alternates between exactly maximising the Lagrangian with respect to the policy parameters (the primal variables) and taking a single gradient step to minimise the Lagrangian with respect to the reward weights (the dual variables).
    This process is repeated until the dual variables converge.
\end{enumerate}

\subsubsection{The Lagrangian and the dual problem}

We begin our derivation of the dual problem by forming the Lagrangian $\Lambda : \mathscr P \times \mathbb R^d \to \mathbb R$ for \cref{prob:mce-irl-primal-orig}:
\begin{subequations}
\begin{align}
  \Lambda(\policy{}, \theta) &= H(A_{0:T-1} \| S_{0:T-1}) \label{eqn:lag-ent}\\
    &\quad + \theta^T \left(
       \expect_{\policy{}}\left[\sum_{t=0}^{T-1} \discount^t \phi(S_t, A_t)\right]
       - \expect_{\expertdataset{}}\left[\sum_{t=0}^{T-1} \discount^t \phi(S_t, A_t)\right]
    \right)\,.\label{eqn:lag-fem}
\end{align}
\end{subequations}
The dual variable vector $\theta \in \mathbb R^d$ can be interpreted as a set of penalty weights to enforce the feature expectation matching constraint in \cref{eqn:lag-fem}.
Importantly, $\policy{}$ is constrained to the simplex $\mathscr P$; later, we will need be careful with this constraint when we form a separate, nested optimisation problem to compute $\pi$.

Equipped with the Lagrangian, we can now express the dual problem to \cref{prob:mce-irl-primal-orig}
\begin{problem}[Dual MCE IRL problem]\label{prob:mce-irl-dual}
Define the dual function $g(\theta)$ as
\begin{equation}
    g(\theta) = \max_{\pi \in \mathscr P} \Lambda(\pi, \theta)\,.
\end{equation}
The dual MCE IRL problem is the problem of finding a $\pi^*$ and $\theta^*$ such that $\theta^*$ attains
\begin{equation}\label{eqn:mce-irl-dual:theta-obj}
    g(\theta^*) = \min_{\theta \in \mathbb R^d} g(\theta)\,,
\end{equation}
while $\pi^*$ attains
\begin{equation}
    \Lambda(\pi^*, \theta^*) = g(\theta^*)\,.
\end{equation}
\end{problem}
Instead of optimising the primal (\cref{prob:mce-irl-primal-orig}), we will instead optimise the dual (\cref{prob:mce-irl-dual}), and treat the recovered $\pi^*$ as a solution for the primal imitation learning problem.
Moreover, we will later see that the recovered $\theta^*$ can be interpreted as parameters of a reward function that incentivises reproducing the expert demonstrations.

\subsubsection{The dual function}

Observe that \cref{prob:mce-irl-dual} is expressed in terms of a dual function $g(\theta) = \max_{\pi} \Lambda(\pi, \theta)$.
Computing $g(\theta)$ can therefore be viewed as a nested optimisation over $\pi$ subject to the probability simplex constraints defining $\mathscr P$.

We will begin by putting the optimisation defining the dual function into the familiar form.
First recall that the policy simplex $\mathscr P$ is defined by two constraints:
\begin{equation}
    \policy{}_t(a_t\giv{}s_t) \geq 0 \text{ and } \sum_{a} \policy_t(a\giv{}s_t)=1 \quad \left( \forall  0 \leq t < T, \forall a_t \in \actionspace, \forall s_t \in \statespace \right)
\end{equation}
Since the causal entropy term in $\Lambda$ is undefined when $\pi$ has negative elements, we will treat the non-negativity constraint as implicit, and focus only on the normalisation constraint.
We will rewrite the normalisation constraint as $h_{s_t}(\pi) = 0$, where the function $h_{s_t}$ is defined as
\begin{equation}
    h_{s_t}(\pi) = \sum_{a \in \mathcal A} \pi(a \giv s_t) - 1\,.
\end{equation}
This gives rise the following optimisation problem:
\begin{problem}[Dual function primal problem]\label{prob:mce-irl-dual-func-primal} The problem of computing the dual function can be equivalently expressed as
\begin{align}
  \max_{
    \pi \in \mathscr P
  } \quad & \Lambda(\pi, \theta) \\
  \text{Subject to} \quad
  & h_{s_t}(\pi) = 0 \quad  \left( \forall  0 \leq t < T, \forall s_t \in \statespace \right)
\end{align}
\end{problem}

The Lagrangian for \cref{prob:mce-irl-dual-func-primal} is
\begin{align}
    \Psi(\pi, \mu; \theta) &= \Lambda(\pi, \theta) + \sum_{s_t\in \mathcal S, 0 \leq t < T} \mu_{s_t} h_{s_t}(\pi) \\
    &= H(A_{0:T-1} \| S_{0:T-1}) + \theta^T \left(
       \expect_{\policy{}}\left[\sum_{t=0}^{T-1} \discount^t \phi(S_t, A_t)\right]
       - \expect_{\expertdataset{}}\left[\sum_{t=0}^{T-1} \discount^t \phi(S_t, A_t)\right]
    \right) \\
    &\qquad + \sum_{s_t\in \mathcal S, 0 \leq t < T} \mu_{s_t} \left(\sum_{a \in \actionspace} \policy{}_t(a \giv s_{t}) - 1 \right)
\end{align}
where $\left\{\mu_{s_t}\right\}_{s_t\in \mathcal S, 0 \leq t < T}$ are dual variables for the normalisation constraints, and we have left the dependence on $\theta$ implicit.
We will attempt to find a minimiser of \cref{prob:mce-irl-dual-func-primal} by solving the Karush-Kuhn-Tucker (KKT) conditions for the problem~\cite[p.224]{boyd2004convex}, which are\footnote{This problem does not have explicit inequality constraints, so we do not require complementary slackness conditions.}
\begin{align}
    \nabla_\pi \Psi(\pi, \mu; \theta) &= 0 \label{eqn:mce-irl-dual-func-kkt-grad-cond}\\
    h_{s_t}(\pi) &= 0 \quad  \left( \forall  0 \leq t < T, \forall s_t \in \statespace \right)\,.\label{eqn:mce-irl-dual-func-kkt-norm-cond}
\end{align}
Notice that $\mu$ only appears in the first equation.
We will later see that we can set it to a value which ensures that the normalisation constraints are satisfied and the gradient vanishes.
First, however, we will compute the gradient of the Lagrangian for \cref{prob:mce-irl-dual-func-primal}:

\begin{lemma}
The gradient of $\Psi(\pi, \mu; \theta)$ is given by
\begin{align}
 \nabla_{\policy{}_t(a_t \giv s_t)} &\Psi(\policy{}, \mu; \theta)
  = \mu_{s_t} - \rho_{\policy,t}(s_t)\vast(1 + \log \policy{}_t(a_t \giv s_t) - \theta^T \phi(s_t, a_t) \nonumber \\
  &\quad - \expect_{\policy{}}\left[\sum_{t'=t+1}^{T-1} \discount^{t' - t}\left(\theta^T \phi(S_{t'}, A_{t'}) - \log \policy{}_{t'}(A_{t'} \giv S_{t'})\right) \giv[\middle]{} s_t, a_t \right]\vast)\,,\label{eqn:mce-irl-lagrangian-grad}
\end{align}
where
\begin{equation}
\rho_{\policy,t}(s_t) = \discount^t\expectation_{\policy}\left[\mathbb I[S_t=s_t]\right] = \discount^t \mathbb P_\pi(S_t=s_t)
\end{equation}
is the discounted probability that the agent will be in state $s_t$ at time $t$ if it follows policy $\policy{}$.
\end{lemma}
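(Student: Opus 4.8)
The plan is to differentiate $\Psi$ term by term with respect to a single policy parameter $\policy_t(a_t \giv s_t)$, treating the policy values at distinct $(t, s_t, a_t)$ as independent optimisation variables. The three terms of $\Psi$ are the causal entropy $H(A_{0:T-1} \| S_{0:T-1})$, the feature-matching penalty $\theta^T(\expect_\policy[\cdots] - \expect_\expertdataset[\cdots])$, and the normalisation penalties $\sum_{s_t, t} \mu_{s_t} h_{s_t}(\pi)$. The last term is immediate: only the summand for this particular $s_t$ (and this $t$) depends on $\policy_t(a_t \giv s_t)$, and it contributes exactly $\mu_{s_t}$. The expert term in the feature penalty does not depend on $\pi$ at all, so it drops out. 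Everything nontrivial lives in the entropy term and the $\expect_\policy$ half of the feature penalty.

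For those two terms I would first rewrite them as explicit sums over trajectories using \cref{eq:trajectory-probability}, or more cleverly group by visitation. The key observation is that both $H(A_{0:T-1} \| S_{0:T-1}) = \expect_\policy[-\sum_{t'} \discount^{t'} \log \policy_{t'}(A_{t'} \giv S_{t'})]$ and $\expect_\policy[\sum_{t'} \discount^{t'} \theta^T\phi(S_{t'}, A_{t'})]$ are expectations of time-additive quantities under the trajectory distribution $p(\trajectory)$, and $p(\trajectory)$ depends on $\policy_t(a_t \giv s_t)$ in two ways: (i) directly, through the single factor $\policy_t(a_t \giv s_t)$ appearing when the trajectory passes through $(s_t, a_t)$ at time $t$; and (ii) through the logarithm $\log \policy_t(A_t \giv S_t)$ sitting inside the entropy integrand at time $t$. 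So I would split the derivative into a "direct" piece (from the $\log \policy_t$ integrand, giving the $-\discount^t \mathbb P_\pi(S_t = s_t)(1 + \log\policy_t(a_t\giv s_t))$ contribution once we account for the product rule on $\policy_t \log \policy_t$) and a "downstream" piece coming from differentiating $p(\trajectory)$ itself, which by the product structure pulls out a factor $\rho_{\policy,t}(s_t) = \discount^t \mathbb P_\pi(S_t = s_t)$ times a conditional expectation over the future $(S_{t'}, A_{t'})_{t' > t}$ given that we are at $(s_t, a_t)$. Collecting the $\theta^T\phi(s_t,a_t)$ term and the $t' > t$ terms into the bracketed conditional expectation of \eqref{eqn:mce-irl-lagrangian-grad}, and pulling out the overall factor $\rho_{\policy,t}(s_t)$, yields the claimed formula; the leftover $\mu_{s_t}$ comes from the normalisation term.

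The main obstacle is bookkeeping the chain rule through $p(\trajectory)$ cleanly: because $\policy_t(a_t \giv s_t)$ appears as one factor in a product, $\partial p(\trajectory)/\partial \policy_t(a_t \giv s_t)$ equals $p(\trajectory)/\policy_t(a_t\giv s_t)$ when $\trajectory$ visits $(s_t,a_t)$ at time $t$ and $0$ otherwise, so summing against any future-additive integrand $f(\trajectory)$ produces $\sum_{\trajectory : S_t = s_t, A_t = a_t} \frac{p(\trajectory)}{\policy_t(a_t\giv s_t)} f(\trajectory) = \discount^t \mathbb P_\pi(S_t = s_t)\, \expect_\policy[f \giv S_t = s_t, A_t = a_t]$ — here one has to be careful that dividing by $\policy_t(a_t\giv s_t)$ exactly cancels that factor and renormalises the conditional expectation correctly. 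The one extra subtlety is the $t' = t$ term of the entropy integrand, where we are differentiating $\policy_t(a_t\giv s_t)\log\policy_t(a_t\giv s_t)$ and must apply the product rule, which is the source of the stray $+1$; the $t' = t$ term of the feature sum, $\theta^T\phi(s_t,a_t)$, contributes only through the direct factor and so appears outside the conditional expectation with coefficient $\rho_{\policy,t}(s_t)$. Once these are handled the remaining $t' > t$ terms assemble exactly into the displayed conditional expectation, and no term is double-counted because the entropy and feature contributions at times $t' > t$ come solely from differentiating $p(\trajectory)$, not from the integrand.
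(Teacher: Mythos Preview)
Your proposal is correct and follows essentially the same route as the paper: differentiate $\Psi$ term by term, note that the normalisation term contributes $\mu_{s_t}$ and the expert term vanishes, then handle the causal-entropy and feature-expectation terms by factoring out the discounted state marginal $\rho_{\policy,t}(s_t)$ and reducing what remains to a conditional expectation over the future given $(s_t,a_t)$, with the product rule on $\policy_t\log\policy_t$ supplying the $+1$. The only cosmetic difference is bookkeeping: the paper conditions on $S_t$ first and then splits the inner expectation by whether $A_t=a_t$, whereas you work directly with $\partial p(\trajectory)/\partial \policy_t(a_t\giv s_t)=p(\trajectory)/\policy_t(a_t\giv s_t)$ on trajectories passing through $(s_t,a_t)$; these are equivalent manipulations.
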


\begin{proof}
We will derive this gradient term-by-term.
Taking the derivative of the third term (normalisation) with respect to some arbitrary $\policy{}_t(a_t \giv s_t)$ yields
\begin{align}
  \nabla_{\policy{}_t(a_t \giv s_t)} \sum_{t'=0}^{T-1} \sum_{s_{t'} \in \statespace} \mu_{s_{t'}} \left(\sum_{a' \in \actionspace} \policy{}_t(a' \giv s_{t'}) - 1 \right)
  &= \mu_{s_t}\,.
\end{align}
Note that we are differentiating with respect to the action selection probability $\policy_t(a_t\giv s_t)$, which is a variable specific to time $t$; the policy need not be stationary, so we may have $\policy_t(a \giv s) \neq \policy_{t'}(a\giv{}s)$.

The derivative of the middle term (feature expectation matching) is
\begin{align}
  \nabla_{\policy{}_t(a_t \giv s_t)} &\theta^T \left(
       \expect_{\policy{}}\left[\sum_{t'=0}^{T-1} \discount^{t'} \phi(S_{t'}, A_{t'})\right]
       - \expect_{\expertdataset{}}\left[\sum_{t'=0}^{T-1} \discount^{t'} \phi(S_{t'}, A_{t'})\right]
    \right) \nonumber \\
    &= \nabla_{\policy{}_t(a_t \giv s_t)} \theta^T \expect_{\policy{}}\left[\sum_{t'=0}^{T-1} \discount^{t'} \phi(S_{t'}, A_{t'})\right] \\
    &= \theta^T \expect_{S_t \sim \policy{}}\left[
         \nabla_{\policy{}_t(a_t \giv s_t)} \expect_{\policy{}}\left[
            \sum_{t'=t}^{T-1} \discount^{t'} \phi(S_{t'}, A_{t'}) \giv[\middle]{} S_t
        \right]
    \right] \\
    &= \theta^T \expect_{S_t \sim \policy{}}\left[
         \gamma^t \mathbb I[S_t=s_t] \nabla_{\policy{}_t(a_t \giv s_t)} \expect_{\policy{}}\left[
            \sum_{t'=t}^{T-1} \discount^{t'-t} \phi(S_{t'}, A_{t'}) \giv[\middle]{} S_t = s_t
        \right]
    \right] \\
    &= \rho_{\policy,t}(s_t) \theta^T \nabla_{\policy{}_t(a_t \giv s_t)} \expect_{\policy{}}\left[
        \sum_{t'=t}^{T-1} \discount^{t'-t} \phi(S_{t'}, A_{t'}) \giv[\middle]{} S_t = s_t
    \right] \\
    &= \rho_{\policy,t}(s_t) \theta^T \nabla_{\policy{}_t(a_t \giv s_t)} \left( \vphantom{\left(\sum_{a'_t \neq a_t}\policy{}_t(a'_t \giv s_t)\right) }
        \policy_t(a_t \mid s_t)\expect\left[
            \sum_{t'=t}^{T-1} \discount^{t'-t} \phi(S_{t'}, A_{t'}) \giv[\middle]{} S_t = s_t, A_t = a_t
        \right] \right. \nonumber\\
    &\qquad\qquad\qquad \left. + \left(\sum_{a'_t \neq a_t}\policy{}_t(a'_t \giv s_t)\right) \cdot \expect\left[
            \sum_{t'=t}^{T-1} \discount^{t'-t} \phi(S_{t'}, A_{t'}) \giv[\middle]{} S_t = s_t, A_t \neq a_t
        \right]
    \right) \\
    &= \rho_{\policy,t}(s_t) \theta^T \expect\left[
        \sum_{t'=t}^{T-1} \discount^{t'-t} \phi(S_{t'}, A_{t'}) \giv[\middle]{} s_t, a_t
    \right] \\
    &= \rho_{\policy,t}(s_t) \, \theta^T \left[\phi(s_t, a_t) + \expect_{\policy{}} \left[\sum_{t'=t+1}^{T-1} \discount^{t' - t}\phi(S_{t'}, A_{t'}) \giv[\middle]{} s_t, a_t\right]\right]\,,
\end{align}
Finally, the derivative of the first term (causal entropy) can be derived in a similar manner as
\begin{align}
  &\nabla_{\policy{}_t(a_t \giv s_t)}  {H(A_{0:T-1} \| S_{0:T-1})}
  = -\nabla_{\policy{}_t(a_t \giv s_t)} \expect_{\policy{}}\left[\sum_{t'=0}^{T-1} \discount^t \log \policy{}_{t'}(A_{t'} \giv S_{t'})\right]\\
  &\qquad= -\rho_{\policy,t}(s_t) \nabla_{\policy{}_t(a_t \giv s_t)} \expect_{\policy{}}\left[\sum_{t'=t}^{T-1} \discount^{t'-t} \log \policy{}_{t'}(A_{t'} \giv{} S_{t'}) \giv[\middle]{} s_t \right]\\
  &\qquad= -\rho_{\policy,t}(s_t) \nabla_{\policy{}_t(a_t \giv s_t)} \expect_{\policy{}}\left[ \log \pi_t(A_t \mid S_t) + \sum_{t'=t+1}^{T-1} \discount^{t'-t} \log \policy{}_{t'}(A_{t'} \giv{} S_{t'}) \giv[\middle]{} s_t \right]\\
  &\qquad= -\rho_{\policy,t}(s_t) \left[1 + \log \policy{}_t(a_t \giv s_t) + \expect_{\policy}\left[\sum_{t'=t+1}^{T-1} \discount^{t' - t} \log \policy{}_{t'}(A_{t'}\giv{}S_{t'}) \giv[\middle]{} s_t, a_t \right] \right]\,.\label{eqn:grad-cent-final:XXX}
\end{align}
Putting it all together, the derivative of $\Psi(\pi,\mu;\theta)$ with respect to our policy is
\begin{align*}
 \nabla_{\policy{}_t(a_t \giv s_t)} &\Psi(\policy{}, \mu; \theta)
  = \mu_{s_t} - \rho_{\policy,t}(s_t)\vast(1 + \log \policy{}_t(a_t \giv s_t) - \theta^T \phi(s_t, a_t) \\
  &\quad - \expect_{\policy{}}\left[\sum_{t'=t+1}^{T-1} \discount^{t' - t}\left(\theta^T \phi(S_{t'}, A_{t'}) - \log \policy{}_{t'}(A_{t'} \giv S_{t'})\right) \giv[\middle]{} s_t, a_t \right]\vast)\,. \tag{\ref{eqn:mce-irl-lagrangian-grad}}
\end{align*}
\end{proof}

We will now solve the first KKT condition, \cref{eqn:mce-irl-dual-func-kkt-grad-cond}, by setting \cref{eqn:mce-irl-lagrangian-grad} equal to zero and solving for $\pi$.
This will give us the $\pi$ which attains $g(\theta) = \Lambda(\pi, \theta)$, thereby allowing us to achieve our goal (for this sub-section) of computing the dual function $g(\theta)$.
Conveniently, the resulting $\pi$ has a form which resembles the optimal policy under value iteration---indeed, the recursion defining $\pi$ is often called \textit{soft value iteration}:
\begin{lemma}
The KKT condition $\nabla_\pi \Psi(\pi, \mu; \theta) = 0$ is satisfied by a policy
\begin{equation}\label{eqn:policy-soft-decomposition}
\pi_t(a_t \mid s_t) = \exp\left(\qsoft_{\theta,t}(s_t, a_t) - \vsoft_{\theta,t}(s_t)\right)
\end{equation}
satisfying the following recursion:
\begin{equation}\label{eqn:policy-soft-vi}
\left.\begin{aligned}
  \vsoft_{\theta,t}(s_t) &= \log \sum_{a_t \in \actionspace} \exp \qsoft_{\theta,t}(s_t, a_t) & \ (\forall 0 \leq t \leq T-1)\\
  \qsoft_{\theta,t}(s_t, a_t) &= \theta^T \phi(s_t, a_t) + \discount \expect_{\mathcal T}\left[\vsoft_{\theta, t+1}(S_{t+1}) \giv[\middle]{} s_t, a_t\right] & \ (\forall 0 \leq t < T - 1) \\
  \qsoft_{\theta,T-1}(s_{T-1}, a_{T-1}) &= \theta^T \phi(s_{T-1}, a_{T-1}) &
\end{aligned}\qquad\right\}\text{ Soft VI.}
\end{equation}
\end{lemma}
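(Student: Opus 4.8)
The plan is to solve the first KKT condition \cref{eqn:mce-irl-dual-func-kkt-grad-cond} directly --- set the gradient \cref{eqn:mce-irl-lagrangian-grad} to zero, solve for $\policy{}$, and check that the solution obeys the Soft VI recursion. First I would group the linear-reward and entropy terms in \cref{eqn:mce-irl-lagrangian-grad} into a single quantity
\begin{equation*}
\widetilde{Q}_t(s_t, a_t) := \theta^T \phi(s_t, a_t) + \expect_{\policy{}}\!\left[\sum_{t'=t+1}^{T-1}\discount^{t'-t}\big(\theta^T\phi(S_{t'},A_{t'}) - \log\policy{}_{t'}(A_{t'}\giv S_{t'})\big) \giv[\middle]{} s_t, a_t\right],
\end{equation*}
so that at any state $s_t$ with $\rho_{\policy,t}(s_t) > 0$ the stationarity condition rearranges to $\log\policy{}_t(a_t\giv s_t) = \widetilde{Q}_t(s_t,a_t) - 1 + \mu_{s_t}/\rho_{\policy,t}(s_t)$. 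Since the last two terms are independent of $a_t$, this forces $\policy{}_t(\,\cdot \giv s_t)$ to be proportional to $a_t \mapsto \exp\widetilde{Q}_t(s_t,a_t)$; imposing the normalisation KKT condition \cref{eqn:mce-irl-dual-func-kkt-norm-cond} then pins the constant, giving $\policy{}_t(a_t\giv s_t) = \exp(\widetilde{Q}_t(s_t,a_t) - \widetilde{V}_t(s_t))$ with $\widetilde{V}_t(s_t) := \log\sum_{a}\exp\widetilde{Q}_t(s_t,a)$. (When $\rho_{\policy,t}(s_t) = 0$, taking $\mu_{s_t} = 0$ makes the gradient vanish for any normalised $\policy{}_t(\cdot\giv s_t)$, so we may use the same formula there.)

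This is still only an implicit characterisation, since $\widetilde{Q}_t$ refers to $\policy{}$ through the future log-policy terms. The key observation is that $\widetilde{Q}_t$ depends only on $\policy{}_{t+1},\dots,\policy{}_{T-1}$ and never on $\policy{}_t$, so the system can be resolved by backward induction on $t$. Concretely, I would prove by downward induction, for the policy just derived, the identity
\begin{equation*}
\widetilde{V}_t(s) = \expect_{\policy{}}\!\left[\sum_{t'=t}^{T-1}\discount^{t'-t}\big(\theta^T\phi(S_{t'},A_{t'}) - \log\policy{}_{t'}(A_{t'}\giv S_{t'})\big) \giv[\middle]{} S_t = s\right],
\end{equation*}
together with $\widetilde{Q}_t = \qsoft_{\theta,t}$ and $\widetilde{V}_t = \vsoft_{\theta,t}$ as defined in \cref{eqn:policy-soft-vi}. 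The base case $t=T-1$ is immediate since the sum has a single term. For the inductive step, splitting off $t'=t$ and applying the law of total expectation over $S_{t+1}\sim\transitiondist(\cdot\giv s,a)$ to the remaining sum turns $\widetilde{Q}_t(s,a)$ into $\theta^T\phi(s,a) + \discount\expect_{\transitiondist}[\widetilde{V}_{t+1}(S_{t+1})\giv s,a]$, which is exactly the Soft VI recursion for $\qsoft_{\theta,t}$; and the elementary identity $\qsoft_{\theta,t}(s,a) - \log\policy{}_t(a\giv s) = \vsoft_{\theta,t}(s)$ (just the logarithm of \cref{eqn:policy-soft-decomposition}) collapses $\expect_{\policy{}}[\qsoft_{\theta,t}(s,A_t) - \log\policy{}_t(A_t\giv s) \giv S_t = s]$ to $\vsoft_{\theta,t}(s)$, closing the induction.

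Finally I would substitute $\widetilde{Q}_t = \qsoft_{\theta,t}$ and $\widetilde{V}_t = \vsoft_{\theta,t}$ back in to confirm that $\policy{}_t(a_t\giv s_t) = \exp(\qsoft_{\theta,t}(s_t,a_t) - \vsoft_{\theta,t}(s_t))$ zeroes \cref{eqn:mce-irl-lagrangian-grad}, with the normalisation multiplier recovered as $\mu_{s_t} = \rho_{\policy,t}(s_t)\big(1 - \vsoft_{\theta,t}(s_t)\big)$ (and $\mu_{s_t} = 0$ at unreachable states), so that both KKT conditions hold. The main obstacle is precisely the self-referential form of the stationarity equation: $\policy{}_t$ occurs on both sides, once directly and once buried in the conditional expectation of future entropy terms. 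The crux that makes the backward induction go through is recognising that the log-sum-exp ``soft value'' $\vsoft_{\theta,t}$ is exactly the expected entropy-augmented return-to-go of the softmax policy it defines; everything else is bookkeeping with the tower property of conditional expectation.
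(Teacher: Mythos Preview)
Your proposal is correct and follows essentially the same route as the paper: set the gradient in \cref{eqn:mce-irl-lagrangian-grad} to zero, read off the exponential-family form for $\policy_t$, use the normalisation constraint to identify the log-sum-exp, and then substitute $\log\policy_{t'} = \qsoft_{\theta,t'} - \vsoft_{\theta,t'}$ back into the future-expectation term to collapse it to the soft Bellman recursion. Your version is a bit more explicit about the backward-induction structure and handles the $\rho_{\policy,t}(s_t)=0$ case, which the paper glosses over, but the underlying argument is the same.
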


\begin{proof}
We begin by setting the derivative of the Lagrangian $\Psi$ to zero and rearranging.
Through this, we find that at optimality the policy (i.e.\ primal variables) must take the form
\begin{align}\label{eqn:policy-form}
  \policy{}_t(a_t \giv s_t) =& \exp\vast(\theta^T \phi(s_t, a_t) - 1 + \frac{\mu_{s_t}}{\rho_{\policy,t}(s_t)} \\
  &+ \expect_{\policy}\left[\sum_{t'=t+1}^{T-1} \discount^{t' - t}\left(\theta^T \phi(S_{t'}, A_{t'}) - \log \policy{}_{t'}(A_{t'} \giv S_{t'})\right) \giv[\middle]{} s_t, a_t\right]\vast)\,. \nonumber
\end{align}
We abuse notation by assuming that the last term vanishes when $t=T-1$, so that at the end of the trajectory we have
\begin{equation}
    \pi_{T-1}(a_{T-1} \mid s_{T-1}) = \exp\left( \theta^T \phi(s_{T-1}, a_{T-1}) - 1 + \frac{\mu_{s_{T-1}}}{\rho_{\policy,T-1}(s_{T-1})} \right)\,.
\end{equation}

Naively calculating the optimal policy from \cref{eqn:policy-form} would require enumeration of exponentially many trajectories to obtain the inner expectation.
We will instead show that $\policy{}$ can be recovered by soft value iteration.
First we decompose $\policy_{\theta, t}(a_t \giv s_t)$ using \cref{eqn:policy-soft-decomposition}, where $\qsoft{}$ and $\vsoft{}$ functions are defined as:
\begin{align}
    &\vsoft_{\theta,t}(s_t) \triangleq 1 - \frac{\mu_{s_t}}{\rho_{\policy,t}(s_t)}\,, \\
    &\qsoft_{\theta,t}(s_t, a_t) \triangleq \theta^T \phi(s_t, a_t) + \expect_{\policy{}}\left[\sum_{t'=t+1}^{T-1} \discount^{t' - t} (\theta^T \phi(S_{t'}, A_{t'}) - \log \policy{}_{\theta, t'}(A_{t'} \giv S_{t'})) \giv[\middle]{} s_t, a_t \right]\,, \\
    &\qsoft_{\theta,T-1}(s_{T-1}, a_{T-1}) \triangleq \theta^T \phi(s_{T-1}, a_{T-1})\,.
\end{align}
We can show that $\qsoft_{\theta,t}(s_t, a_t)$ and $\vsoft_{\theta,t}(s_t)$ satisfy a ``softened'' version of the Bellman equations.
Note that by setting the $\mu_{s_t}$ variables to be appropriate normalising constants, we ensure that the normalisation KKT conditions in \cref{eqn:mce-irl-dual-func-kkt-norm-cond} are satisfied.
Since we wish to constrain ourselves to normalised policies $\pi \in \mathscr P$, we can assume the $\mu_{s_t}$ are chosen such that $\sum_{a \in \actionspace}\policy_t(a \giv s_t) = 1$ for all $s_t \in \statespace$.
It then follows that $\vsoft_{\theta,t}(s_t)$ must be a soft maximum over $\qsoft_{\theta,t}(s_t, a_t)$ values in $s_t$:
\begin{align}
 1 &= \sum_{a_t \in \actionspace} \policy{}_{\theta, t}(a_t \giv s_t) \\
 &= \sum_{a_t \in \actionspace} \exp(\qsoft_{\theta,t}(s_t, a_t) - \vsoft_{\theta,t}(s_t))\\
 1 \cdot \exp \vsoft_{\theta,t}(s_t)  &= \left( \sum_{a_t \in \actionspace} \exp(\qsoft_{\theta,t}(s_t, a_t) - \vsoft_{\theta,t}(s_t)) \right) \cdot \exp \vsoft_{\theta,t}(s_t)  \\
 &= \sum_{a_t \in \actionspace} \exp \qsoft_{\theta,t}(s_t, a_t)\\
 \vsoft_{\theta,t}(s_t) &= \log \sum_{a_t \in \actionspace} \exp \qsoft_{\theta,t}(s_t, a_t)\,.
\end{align}

Likewise, we can use the definitions of $\qsoft_{\theta,t}$ and $\vsoft_{\theta,t}$ to show that for $t < T - 1$, we have
\begin{align}
    \qsoft_{\theta,t}(s_t, a_t)
    &= \theta^T \phi(s_t, a_t) + \expect_{\policy{}_{\theta}}\left[\sum_{t'=t+1}^{T-1} \discount^{t' - t}(\theta^T \phi(S_{t'}, A_{t'}) - \log \policy{}_{\theta,t'}(A_{t'} \giv S_{t'})) \giv[\middle]{} S_t = s_t, A_t = a_t\right]\\
    &= \theta^T \phi(s_t, a_t) + \discount \expect_{\transitiondist} \vast[\expect_{\policy{}_{\theta}}\vast[\qsoft_{\theta, t+1}(S_{t+1}, A_{t+1}) - \\
    &\qquad\qquad\qquad\qquad\qquad\qquad\: \log \policy{}_{\theta,t+1}(A_{t+1} \giv S_{t+1}) \giv[\vast]{} S_{t+1} \vast] \giv[\vast]{} S_t = s_t, A_t = a_t\vast] \nonumber \\ %
    &= \theta^T \phi(s_t, a_t) + \discount \expect_{\transitiondist} \vast[\expect_{\policy{}_{\theta}}\vast[\qsoft_{\theta, t+1}(S_{t+1}, A_{t+1}) - \\
    &\qquad\qquad\qquad\qquad\qquad\: \left(\qsoft_{\theta, t+1}(S_{t+1}, A_{t+1}) - \vsoft_{\theta, t+1}(S_{t+1})\right) \giv[\vast]{} S_{t+1} \vast] \giv[\vast]{} S_t = s_t, A_t = a_t \vast] \nonumber \\ %
    &= \theta^T \phi(s_t, a_t) + \discount \expect_{\transitiondist} \left[\vsoft_{\theta, t+1}(S_{t+1}) \giv[\middle]{} S_t = s_t, A_t = a_t\right]\,,
\end{align}
where the penultimate step follows from substituting $\policy_{\theta,t}(a_t \giv s_t) = \exp\left(\qsoft_{\theta,t}(s_t, a_t) - \vsoft_{\theta,t}(s_t)\right)$.

Putting it all together, we have the soft VI equations:
\begin{align}
  \vsoft_{\theta,t}(s_t) &= \log \sum_{a_t \in \actionspace} \exp \qsoft_{\theta,t}(s_t, a_t)\\
  \qsoft_{\theta,t}(s_t, a_t) &= \theta^T \phi(s_t, a_t) + \discount \expect_{\mathcal T}\left[\vsoft_{\theta, t+1}(S_{t+1}) \giv[\middle]{} s_t, a_t\right] \\
  \qsoft_{\theta,T}(s_T, a_T) &= \theta^T \phi(s_T, a_T).
\end{align}
\end{proof}

\begin{remark}[Soft VI]
Observe that the soft VI equations are analogous to traditional value iteration, but with the hard maximum over actions replaced with a log-sum-exp, which acts as a ``soft'' maximum.
In the finite-horizon case, these equations can be applied recursively from time $t=T-1$ down to $t=0$, yielding an action distribution $\policy_{\theta,t}(a_t \giv s_t) = \exp(\qsoft_{\theta,t}(s_t, a_t) - \vsoft_{\theta,t}(s_t))$ at each time step $t$ and state $s_t$.
In the infinite-horizon case, we drop the subscripts $t$ and search for a fixed point $\vsoft_{\theta}$ and $\qsoft_{\theta}$ to the soft VI equations with corresponding stationary policy $\policy_{\theta}$.

In both cases, the agent chooses actions with probability exponential in the soft advantage $\asoft_{\theta,t}(s_t, a_t) \triangleq \qsoft_{\theta,t}(s_t, a_t) - \vsoft_{\theta,t}(s_t)$.
Thus, computing the dual function $g(\theta)$ reduces to solving a planning problem.
The similarity between the soft VI equations and the ordinary Bellman equations means that we are (somewhat) justified in interpreting the dual variable vector $\theta$ as weights for a reward function $r_{\theta}(s_t, a_t) = \theta^T \phi(s_t, a_t)$.
\end{remark}

\subsubsection{Learning a reward function with dual ascent}

\begin{algorithm}[t]
\begin{algorithmic}[1]
\State{Initialise some reward parameter estimate $\theta_0$, and set $k \gets 0$}
\Repeat{}
  \State{Apply soft value iteration to obtain optimal policy $\policy_{\theta_k}$ w.r.t $\theta_k$ (\cref{eqn:policy-soft-vi})}
  \State{$\theta_{k+1} \gets \theta_k + \alpha_k \left(\expect\limits_{\mathcal D}\left[\sum_{t=0}^{T-1} \discount^t \nabla_{\theta} r_\theta(S_t, A_t)\right] - \expect\limits_{\policy_{\theta_k}}\left[\sum_{t=0}^{T-1} \discount^t \nabla_{\theta} r_\theta(S_t, A_t)\right]\right)$ (\cref{eqn:mce-irl-dual-gradient})}
  \State{$k \gets k + 1$}
\Until{Stopping condition satisfied}
\end{algorithmic}
\caption{Maximum Causal Intropy (MCE) IRL on demonstrator $\mathcal D$}\label{alg:mce-irl}
\end{algorithm}

In the previous subsection, we determined how to find the policy $\pi$ which attains the value of the dual function $g(\theta)$, so that $g(\theta) = \Lambda(\policy{}, \theta)$.
All that remains is to determine the step that we need to take on the dual variables $\theta$ (which we are interpreting as reward parameters).
The gradient of the dual w.r.t $\theta$ is given by
\begin{equation}\label{eqn:mce-irl-dual-gradient}
  \nabla_\theta \Lambda(\policy{}, \theta) = \expect_{\policy{}}\left[\sum_{t=0}^{T-1} \discount^t \phi(S_t, A_t)\right] - \expect_{\expertdataset{}}\left[\sum_{t=0}^{T-1} \discount^t \phi(S_t, A_t)\right]\,.
\end{equation}
The first (policy $\policy$) term can be computed by applying soft VI to $\theta$ and then rolling the optimal policy forward to obtain occupancy measures.
The second (demonstrator $\expertdataset{}$) term can be computed directly from demonstration trajectories.
This cycle of performing soft VI followed by a gradient step on $\theta$ is illustrated in \cref{alg:mce-irl}.
In the next section, we'll see an alternative perspective on the same algorithm that does not appeal to feature expectation matching or duality, and which can be extended to non-linear reward functions.

\subsection{MCE IRL as maximum likelihood estimation}\label{sec:mce-irl:max-likelihood}

In the previous section, we saw that IRL can be reduced to dual ascent on a feature expectation matching problem.
We can also interpret this method as maximum likelihood estimation of reward parameters $\theta$ subject to the distribution over trajectories induced by the policy $\policy_{\theta,t}(a_t \giv s_t) = \exp(\qsoft_{\theta,t}(s_t, a_t) - \vsoft_{\theta,t}(s_t))$ under the soft VI recursion in \cref{eqn:policy-soft-vi}.
This perspective has the advantage of allowing us to replace the linear reward function $r_\theta(s_t, a_t) = \theta^T \phi(s_t, a_t)$ with a general non-linear reward function $r_\theta(s_t, a_t)$.
If $r_\theta(s_t, a_t)$ is non-linear, then the resulting problem no longer has the same maximum-entropy feature-matching justification as before, although it does still appear to work well in practice.

To show that the feature expectation matching and maximum likelihood views are equivalent when $r_\theta$ is linear, we will show that the gradient for the dual function $g(\theta)$ and the gradient for the dataset expected log likelihood are equivalent.
First, we will introduce the notion of the \textit{discounted likelihood} of a trajectory.
\begin{defn}[Discounted likelihood]
  The \textit{discounted likelihood} of a trajectory $\tau = (s_0, a_0, s_1, a_1, \ldots, a_{T-1}, s_T)$ under policy $\pi_\theta$ is
  \begin{equation}
      p_{\theta,\gamma}(\trajectory{}) = \initialstatedist(s_0) \prod_{t=0}^{T-1} \transitiondist(s_{t+1} \giv s_t, a_t)  \policy{}_{\theta,t}(a_t \giv s_t)^{\discount^t}\,,
  \end{equation}
  where $\gamma \in [0,1]$ is a discount factor.
\end{defn}
When $\gamma=1$, the discounted likelihood is simply the likelihood of $\tau$ under the policy $\pi_\theta$.
When $\gamma<1$, probabilities of actions later in the trajectory will be regularised towards 1 by raising them to $\gamma^t$.
This will not correspond to a normalised probability distribution, although we will see that it allows us to draw a connection between maximum likelihood inference and discounted MCE IRL.

Now consider the log discounted likelihood of a single trajectory $\trajectory{}$ under a model in which the agent follows the soft VI policy $\pi_{\theta,t}(a_t \giv s_t)$ with respect to reward parameters $\theta$:
\begin{align}
   \log p_\theta(\trajectory{}) &= \sum_{t=0}^{T-1} \discount^t \log \policy{}_{\theta,t}(a_t \giv s_t) + \log \initialstatedist(s_0) + \sum_{t=1}^{T} \log \transitiondist(s_{t} \giv s_{t-1}, a_{t-1}) \\
   &= \sum_{t=0}^{T-1} \discount^t \left(\qsoft_{\theta,t}(s_t, a_t) - \vsoft_{\theta,t}(s_t) \right) + f(\tau)\,.
\end{align}
Here $f(\tau)$ contains all the $\log \mathcal T$ and $\log \initialstatedist$ (dynamics) terms, which are constant in $\theta$.
Although the expression above only matches the log likelihood when $\gamma=1$, we have nevertheless added explicit discount factors so that the gradient equivalence with $g(\theta)$ holds when $\gamma < 1$.
The expected log likelihood of a demonstrator's trajectories, sampled from demonstration distribution $\expertdataset{}$, is then
\begin{align}
  \loglikelihood{}(\expertdataset{}; \theta) &= \expect_{\expertdataset{}}\left[\log p_\theta(\trajectory{})\right]\\
  &= \expect_{\expertdataset{}}\left[\sum_{t=0}^{T-1} \discount^t \left(\qsoft_{\theta,t}(S_t, A_t) - \vsoft_{\theta,t}(S_t)\right) \right] + c \\
  &= \expect_{\expertdataset{}}\Bigg[\sum_{t=0}^{T-2} \discount^t \left(r_\theta(S_t, A_t) + \discount\expect_{\transitiondist{}}\left[\vsoft_{\theta,t+1}(S_{t+1})\giv{}S_t, A_t\right] - \vsoft_{\theta,t}(S_t)\right) \\
  &\qquad\qquad + \discount^{T-1}\left(r_\theta(S_{T-1}, A_{T-1}) - \vsoft_{\theta,T-1}(S_{T-1})\right) \Bigg] + c \nonumber \\
  &= \expect_{\expertdataset{}}\left[\sum_{t=0}^{T-1} \discount^t r_\theta(S_t, A_t)\right] + \expect_{\expertdataset{}}\left[\sum_{t=0}^{T-2} \discount^{t+1} \expect_{\mathcal T}\left[\vsoft_{\theta,t+1}(S_{t+1})\giv{}S_t, A_t\right]\right] \\
   &\qquad- \expect_{\expertdataset{}}\left[\sum_{t=0}^{T-1} \discount^t \vsoft_{\theta,t}(S_t)\right] + c \nonumber.\label{eqn:mce-irl}
\end{align}
where $c$ is a constant that does not depend on $\theta$.

The transitions in the demonstration distribution $\expertdataset{}$ will follow $\transitiondist{}$ provided the demonstrator acts in the same MDP, and so we can drop the $\expectation_{\transitiondist}$. Critically, this would \emph{not} be possible were $\expertdataset{}$ instead an empirical distribution from a finite number of samples, as then $\expertdataset{}$ might not exactly follow $\transitiondist{}$. However, this is a reasonable approximation for sufficiently large demonstration datasets.

After dropping $\expectation_{\transitiondist}$, we simplify the telescoping sums:
\begin{align}
  \loglikelihood{}(\expertdataset{}; \theta) &=  \expect_{\expertdataset{}}\left[\sum_{t=0}^{T-1} \discount^t r_\theta(S_t, A_t)\right] + \expect_{\expertdataset{}}\left[\sum_{t=0}^{T-2} \discount^{t+1} \vsoft_{\theta,t+1}(S_{t+1}) - \sum_{t=0}^{T-1} \discount^t \vsoft_{\theta,t}(S_t)\right] + c \\
  &= \expect_{\expertdataset{}}\left[\sum_{t=0}^{T-1} \discount^t r_\theta(S_t, A_t)\right] - \vsoft_{\theta,0}(s_0) + c\,. \label{eqn:mce-irl-ll}
\end{align}
All that remains to show is that the gradient of this expression is equal to the gradient of the dual function $g(\theta)$ that we saw in \cref{sec:mce-irl}.

\begin{lemma}
   Assume that the reward function is linear in its parameters $\theta$, so that $r_\theta(s,a) = \theta^T \phi(s,a)$.
   Gradient ascent on the expected log likelihood from \cref{eqn:mce-irl-ll} updates parameters in the same direction as gradient descent on $g(\theta)$ from \cref{prob:mce-irl-dual}; that is,
   \begin{equation}
       \nabla_\theta \mathcal L(\mathcal D; \theta) = -\nabla_\theta g(\theta)\,,
   \end{equation}
   where the negative on the right arises from the fact that we are doing gradient \textit{ascent} on the expected log likelihood, and gradient \textit{descent} on the dual function $g(\theta)$.
\end{lemma}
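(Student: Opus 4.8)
The plan is to compute $\nabla_\theta\mathcal L(\expertdataset;\theta)$ and $\nabla_\theta g(\theta)$ separately and observe that they sum to zero. The first gradient is trivial given linearity: since $\nabla_\theta r_\theta(s,a)=\phi(s,a)$ and the constant $c$ in \cref{eqn:mce-irl-ll} vanishes under $\nabla_\theta$,
\[
  \nabla_\theta\mathcal L(\expertdataset;\theta)
  = \expect_{\expertdataset}\left[\sum_{t=0}^{T-1}\discount^t\phi(S_t,A_t)\right] - \nabla_\theta\vsoft_{\theta,0}(s_0)\,.
\]
For the second, recall that $g(\theta)=\Lambda(\policy_\theta,\theta)$ where $\policy_\theta$ is the soft VI policy of \cref{eqn:policy-soft-vi} and $\Lambda$ depends on $\theta$ only through the (linear) feature-penalty term. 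Because $\policy_\theta$ satisfies the stationarity condition $\nabla_\pi\Psi=0$ of \cref{eqn:mce-irl-dual-func-kkt-grad-cond}, its implicit dependence on $\theta$ contributes nothing at first order---the variation $\partial_\theta\policy_\theta$ stays tangent to the normalisation surface, and the KKT condition forces $\nabla_\pi\Lambda(\policy_\theta,\theta)$ to be a combination of the constraint normals $\nabla_\pi h_{s_t}$, hence orthogonal to that tangent direction. So the envelope theorem gives $\nabla_\theta g(\theta)=\nabla_\theta\Lambda(\pi,\theta)\big|_{\pi=\policy_\theta}$, which by \cref{eqn:mce-irl-dual-gradient} is exactly $\expect_{\policy_\theta}[\sum_t\discount^t\phi(S_t,A_t)]-\expect_{\expertdataset}[\sum_t\discount^t\phi(S_t,A_t)]$. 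Adding the two gradients, everything cancels except $\expect_{\policy_\theta}[\sum_t\discount^t\phi(S_t,A_t)]-\nabla_\theta\vsoft_{\theta,0}(s_0)$, so the lemma reduces to the single identity $\nabla_\theta\vsoft_{\theta,0}(s_0)=\expect_{\policy_\theta}[\sum_{t=0}^{T-1}\discount^t\phi(S_t,A_t)]$.

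I would prove that identity by backward induction on $t$, establishing simultaneously $\nabla_\theta\qsoft_{\theta,t}(s_t,a_t)=\expect_{\policy_\theta}[\sum_{t'=t}^{T-1}\discount^{t'-t}\phi(S_{t'},A_{t'})\mid S_t=s_t,A_t=a_t]$ and $\nabla_\theta\vsoft_{\theta,t}(s_t)=\expect_{\policy_\theta}[\sum_{t'=t}^{T-1}\discount^{t'-t}\phi(S_{t'},A_{t'})\mid S_t=s_t]$. The base case $t=T-1$ is immediate: $\qsoft_{\theta,T-1}(s,a)=\theta^T\phi(s,a)$ gives $\nabla_\theta\qsoft_{\theta,T-1}=\phi$, and differentiating $\vsoft_{\theta,T-1}(s)=\log\sum_a\exp\qsoft_{\theta,T-1}(s,a)$ produces the softmax-weighted---i.e.\ $\policy_{\theta,T-1}(\cdot\mid s)$-weighted---average of the $\phi(s,a)$. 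For the inductive step, differentiating $\qsoft_{\theta,t}(s_t,a_t)=\theta^T\phi(s_t,a_t)+\discount\expect_{\transitiondist}[\vsoft_{\theta,t+1}(S_{t+1})\mid s_t,a_t]$ and plugging in the hypothesis for $\nabla_\theta\vsoft_{\theta,t+1}$ gives the $\qsoft_{\theta,t}$ claim; then, since $\partial\vsoft_{\theta,t}(s_t)/\partial\qsoft_{\theta,t}(s_t,a_t)=\exp(\qsoft_{\theta,t}(s_t,a_t)-\vsoft_{\theta,t}(s_t))=\policy_{\theta,t}(a_t\mid s_t)$ by \cref{eqn:policy-soft-decomposition}, the chain rule gives $\nabla_\theta\vsoft_{\theta,t}(s_t)=\sum_{a_t}\policy_{\theta,t}(a_t\mid s_t)\nabla_\theta\qsoft_{\theta,t}(s_t,a_t)$, and averaging the conditional feature count over $a_t\sim\policy_{\theta,t}(\cdot\mid s_t)$ removes the conditioning on $A_t$. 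Taking $t=0$ (and, when the start state is random, the further expectation over $S_0\sim\initialstatedist$, which the telescoping step leading to \cref{eqn:mce-irl-ll} tacitly used) yields the identity, completing the proof.

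A slightly slicker variant bypasses $\nabla_\theta\vsoft_{\theta,0}$ entirely: using the definitions of $\qsoft,\vsoft$ from the soft-VI lemma together with $\log\policy_{\theta,t}(a\mid s)=\qsoft_{\theta,t}(s,a)-\vsoft_{\theta,t}(s)$, one unrolls the recursion to get $\expect_{\initialstatedist}[\vsoft_{\theta,0}(S_0)]=\theta^T\expect_{\policy_\theta}[\sum_t\discount^t\phi(S_t,A_t)]+H(A_{0:T-1}\|S_{0:T-1})$, hence $g(\theta)=\expect_{\initialstatedist}[\vsoft_{\theta,0}(S_0)]-\theta^T\expect_{\expertdataset}[\sum_t\discount^t\phi(S_t,A_t)]$; then $\mathcal L(\expertdataset;\theta)+g(\theta)$ is constant in $\theta$ and the claim follows instantly. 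In either route the routine part is the backward induction; the two places that need care are (i) the envelope-theorem step, which is legitimate because it only uses the stationarity (KKT) condition established earlier, not global optimality of the generally non-convex \cref{prob:mce-irl-primal-orig}, and (ii) making explicit the harmless identification of $\vsoft_{\theta,0}(s_0)$ with $\expect_{\initialstatedist}[\vsoft_{\theta,0}(S_0)]$. The passage from sampled demonstrations to the distribution $\expertdataset$ (which let us drop the $\expect_{\transitiondist}$ terms) was already handled in deriving \cref{eqn:mce-irl-ll}, so it raises no new difficulty.
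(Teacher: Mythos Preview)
Your proposal is correct and its main route coincides with the paper's proof: both reduce the claim to computing $\nabla_\theta\vsoft_{\theta,0}(s_0)$ and establish, by unrolling the soft-VI recursion, that $\nabla_\theta\vsoft_{\theta,t}(s_t)=\expect_{\policy_\theta}\bigl[\sum_{t'\ge t}\discount^{t'-t}\nabla_\theta r_\theta(S_{t'},A_{t'})\,\big|\,s_t\bigr]$, which in the linear case is the discounted feature expectation. The paper writes this as a direct chain of equalities; your backward induction is the same computation, just packaged more formally.

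Two points of difference are worth noting. First, you explicitly justify $\nabla_\theta g(\theta)=\nabla_\theta\Lambda(\pi,\theta)\big|_{\pi=\policy_\theta}$ via the envelope theorem and the KKT stationarity condition, whereas the paper simply quotes \cref{eqn:mce-irl-dual-gradient} as ``the gradient of $g$'' without spelling out why the implicit $\theta$-dependence of $\policy_\theta$ drops; your treatment is the more careful one here. Second, your ``slicker variant''---using the soft-value identity $\expect_{\initialstatedist}[\vsoft_{\theta,0}(S_0)]=\theta^T\expect_{\policy_\theta}[\sum_t\discount^t\phi]+H(A_{0:T-1}\|S_{0:T-1})$ to conclude that $\mathcal L+g$ is constant in $\theta$---is a genuinely different and shorter argument that the paper does not give; it trades the gradient computation for a one-line algebraic observation, at the cost of invoking (or re-deriving) that value identity.
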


\begin{proof}
We already know the gradient of $g(\theta)$ from \cref{eqn:mce-irl-dual-gradient}, so we only need to derive the gradient of $\mathcal L(\mathcal D; \theta)$.
Computing the gradient of the first term of $\mathcal L(\mathcal D; \theta)$ is trivial.
We push the differentiation operator inside the expectation so that we are averaging $\nabla r_\theta(s_t, a_t)$ over the states and actions in our dataset of samples from $\expertdataset{}$:
\begin{equation}
    \nabla_\theta \expect_{\expertdataset{}}\left[\sum_{t=0}^{T-1} \discount^t r_\theta(S_t, A_t)\right]
    = 
    \expect_{\expertdataset{}}\left[\sum_{t=0}^{T-1} \discount^t \nabla_\theta r_\theta(S_t, A_t)\right]
\end{equation}
The second term is slightly more involved, but can be derived recursively as
\begin{align}\label{eqn:v-grad}
    \nabla_\theta \vsoft_{\theta,t}(s_t) &= \nabla_\theta \log \sum_{a_t \in \mathcal A} \exp \qsoft_{\theta,t}(s_t, a_t)\\
    &= \frac{1}{\sum_{a_t \in \actionspace} \exp \qsoft_{\theta,t}(s_t, a_t)} \sum_{a_t \in \actionspace} \nabla_\theta \exp \qsoft_{\theta,t}(s_t, a_t)\\
    &= \frac{\sum_{a_t \in \actionspace} \exp \left(\qsoft_{\theta,t}(s_t, a_t)\right) \nabla_\theta \qsoft_{\theta,t}(s_t, a_t)}{\exp \vsoft_{\theta,t}(s_t)} \\
    &= \sum_{a_t \in \actionspace} \policy{}_{\theta,t}(a_t \giv s_t) \nabla_{\theta} \qsoft_{\theta,t}(s_t, a_t)\\
    &= \expect_{\policy{}_{\theta,t}}\left[\nabla_\theta r_\theta(s_t, A_t) + \discount\expect_{\transitiondist}\left[\nabla_\theta \vsoft_{\theta,t+1}(S_{t+1})\giv[\middle]{}s_t,A_t\right] \giv[\middle]{} s_t\right]\\
    &= \expect_{\policy{}_\theta}\left[\sum_{t'=t}^{T-1} \discount^{t' - t}\nabla_\theta r_\theta(S_{t'}, A_{t'}) \giv[\middle]{} s_t \right]\,,
\end{align}
where the last step unrolled the recursion to time $T$.
Armed with this derivative, the gradient of our log likelihood is
\begin{equation}
\nabla_\theta \loglikelihood{}(\expertdataset; \theta) = {\expect_{\expertdataset{}}\left[\sum_{t=0}^{T-1} \discount^t \nabla_\theta r_\theta(S_t, A_t)\right]} - {\expect_{\policy{}_\theta}\left[\sum_{t=0}^{T-1} \discount^t \nabla_\theta r_\theta(S_t, A_t)\right]}\,.\label{eqn:ll-gradient}
\end{equation}
In the special case of a linear reward $r_\theta(s, a) = \theta^T \phi(s, a)$, this gradient is equal to $-\nabla_\theta g(\theta)$ (where $\nabla_\theta g(\theta)$ is given in \cref{eqn:mce-irl-dual-gradient}).
This shows that maximising the log likelihood $\mathcal L$ is equivalent to minimising the dual function $g(\theta)$ from the previous section.
\end{proof}

\subsection{Maximum Entropy (ME) IRL: MCE IRL for deterministic MDPs}\label{sec:mce-irl:max-ent}

So far we've considered the general setting of stochastic transition dynamics, where for any given state--action pair $(s_t, a_t) \in \statespace \times \actionspace$ there may be many possible successor states $s_{t+1} \in \statespace$ for which $\transitiondist{}(s_{t+1} \giv s_t, a_t) > 0$.
Consider what happens if we act under the Maximum Causal Entropy (MCE) policy $\policy_{\theta,t}(a_t \giv s_t) = \exp(Q_{\theta,t}(s_t, a_t) - V_{\theta,t}(s_t))$ for some $\theta$, but restrict ourselves to the case of deterministic dynamics, where $\transitiondist{}(s_{t+1} \giv s_t, a_t) = 1$ for one successor state $s_{t+1}$ and zero for all others.
Here the discounted likelihood of some \textit{feasible} trajectory $\tau$---in which the state transitions agree with the dynamics---simplifies substantially.

\begin{lemma}\label{lem:me-irl}
  In an MDP with deterministic state transitions and a deterministic initial state distribution, the MCE IRL density $p_{\theta,\gamma}(\tau)$ takes the form
\begin{equation}
\labelAndRemember{eqn:me-density-form}{
    p_{\theta,\gamma}(\tau) = \begin{cases}
    \frac{1}{Z_\theta} \exp\left(\sum_{t=0}^{T-1} \discount^t r_\theta(s_t, a_t)\right) & \text{feasible }\tau \\
    0 & \text{infeasible }\tau\,,
    \end{cases}
}
\end{equation}
where $Z_\theta = \exp V_{\theta,0}(s_0)$ is a normalising constant dependent on the (deterministic) initial state $s_0$.
\end{lemma}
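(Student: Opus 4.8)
The plan is to begin from the definition of the discounted likelihood, $p_{\theta,\gamma}(\tau) = \initialstatedist(s_0)\prod_{t=0}^{T-1}\transitiondist(s_{t+1}\giv s_t,a_t)\,\policy_{\theta,t}(a_t\giv s_t)^{\discount^t}$, and to exploit determinism to eliminate the dynamics factors. When $\initialstatedist$ is a point mass and every $\transitiondist(s_{t+1}\giv s_t,a_t)$ lies in $\{0,1\}$, the product $\initialstatedist(s_0)\prod_{t}\transitiondist(s_{t+1}\giv s_t,a_t)$ equals $1$ exactly when $\tau$ is feasible and $0$ otherwise. This disposes of the infeasible case immediately, and reduces the feasible case to $p_{\theta,\gamma}(\tau) = \prod_{t=0}^{T-1}\policy_{\theta,t}(a_t\giv s_t)^{\discount^t}$.

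Next I would substitute the soft-VI policy form $\policy_{\theta,t}(a_t\giv s_t) = \exp(\qsoft_{\theta,t}(s_t,a_t) - \vsoft_{\theta,t}(s_t))$ from \cref{eqn:policy-soft-decomposition}, obtaining an exponent $\sum_{t=0}^{T-1}\discount^t(\qsoft_{\theta,t}(s_t,a_t) - \vsoft_{\theta,t}(s_t))$. I then invoke the soft Bellman recursion of \cref{eqn:policy-soft-vi}: since $\tau$ is feasible, the transition out of $(s_t,a_t)$ is deterministic, so $\expect_{\transitiondist}[\vsoft_{\theta,t+1}(S_{t+1})\giv s_t,a_t] = \vsoft_{\theta,t+1}(s_{t+1})$, giving $\qsoft_{\theta,t}(s_t,a_t) = r_\theta(s_t,a_t) + \discount\,\vsoft_{\theta,t+1}(s_{t+1})$ for $t < T-1$ and $\qsoft_{\theta,T-1}(s_{T-1},a_{T-1}) = r_\theta(s_{T-1},a_{T-1})$ at the boundary.

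Plugging these in, the exponent becomes $\sum_{t=0}^{T-1}\discount^t r_\theta(s_t,a_t) + \sum_{t=0}^{T-2}\discount^{t+1}\vsoft_{\theta,t+1}(s_{t+1}) - \sum_{t=0}^{T-1}\discount^t\vsoft_{\theta,t}(s_t)$. Re-indexing the middle sum to $\sum_{t=1}^{T-1}\discount^t\vsoft_{\theta,t}(s_t)$ and cancelling it against the last sum leaves only the $t=0$ remainder $-\vsoft_{\theta,0}(s_0)$, so $p_{\theta,\gamma}(\tau) = \exp(-\vsoft_{\theta,0}(s_0))\exp(\sum_{t=0}^{T-1}\discount^t r_\theta(s_t,a_t))$; defining $Z_\theta = \exp \vsoft_{\theta,0}(s_0)$, which depends only on the deterministic initial state $s_0$, gives the claimed form. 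The computation is essentially routine; the only steps needing care are handling the $t=T-1$ boundary term of the recursion correctly and justifying the replacement of $\expect_{\transitiondist}[\vsoft_{\theta,t+1}\giv s_t,a_t]$ by $\vsoft_{\theta,t+1}(s_{t+1})$, which is legitimate precisely because we have restricted to feasible trajectories. As an optional sanity check, for $\gamma = 1$ the density is a genuine probability distribution over feasible trajectories, which forces $Z_\theta = \sum_{\text{feasible }\tau}\exp(\sum_{t}r_\theta(s_t,a_t))$, consistent with $Z_\theta = \exp\vsoft_{\theta,0}(s_0)$.
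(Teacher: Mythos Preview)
Your proposal is correct and follows essentially the same route as the paper's proof: start from the discounted likelihood, use determinism to drop the $\initialstatedist$ and $\transitiondist$ factors (yielding $0$ for infeasible $\tau$ and $1$ for feasible $\tau$), substitute $\policy_{\theta,t}=\exp(\qsoft_{\theta,t}-\vsoft_{\theta,t})$, expand $\qsoft_{\theta,t}$ via the soft Bellman recursion with the expectation collapsed by determinism, and telescope to leave $-\vsoft_{\theta,0}(s_0)$. The paper carries the factor $\initialstatedist(s_0)$ through the calculation and sets it to $1$ at the end, while you dispose of it at the start, but the argument is otherwise identical.
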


\begin{proof}
Assuming deterministic dynamics, the discounted likelihood for a feasible $\trajectory{}$ becomes:
\begin{align}
    p_{\theta,\gamma}(\trajectory{}) &= \initialstatedist(s_0) \prod_{t=0}^{T-1} \transitiondist{}(s_{t+1} \giv s_{t}, a_{t}) \policy{}_t(a_{t} \giv s_t)^{\discount^t} \\
    &= \initialstatedist(s_0)\prod_{t=0}^{T-1} \policy{}_t(a_{t} \giv s_t)^{\discount^t} \label{eqn:me-drop-dyn-fac} \\
    &= \initialstatedist(s_0)\exp\left(\sum_{t=0}^{T-1}\discount^t\left(Q_{\theta,t}(s_t, a_t) - V_{\theta,t}(s_t)\right)\right) \\
    &= \initialstatedist(s_0)\exp\vast(\discount^{T-1}\left(r_\theta(s_{T-1}, a_{T-1}) - V_{\theta,T-1}(s_{T-1})\right) \label{eqn:me-redundant-expect} \\
    &\qquad\qquad\qquad\  + \sum_{t=0}^{T-2} \discount^t\left(r_\theta(s_t, a_t) + \discount\expect_{\transitiondist}[V_{\theta,t+1}(S_{t+1}) \giv s_t, a_t] - V_{\theta,t}(s_t)\right) \vast) \nonumber \\
    &= \initialstatedist(s_0)\exp\vast(\discount^{T-1}\left(r_\theta(s_{T-1}, a_{T-1}) - V_{\theta,T-1}(s_{T-1})\right) \\
    &\qquad\qquad\qquad\ + \sum_{t=0}^{T-2}\discount^t\left(r_\theta(s_t, a_t) + \discount V_{\theta,t+1}(s_{t+1}) - V_{\theta,t}(s_t)\right)\vast) \nonumber \\
    &= \frac{\initialstatedist(s_0)}{\exp V_{\theta,0}(s_0)}\exp\left(\sum_{t=0}^{T-1} \discount^t r_\theta(s_t, a_t)\right).
\end{align}
We made use of determinism to drop the dynamics factors in \cref{eqn:me-drop-dyn-fac}, and to collapse the expectation over successor states that appeared in \cref{eqn:me-redundant-expect}.
If we now assume that the initial state distribution $\initialstatedist$ is deterministic (with all weight on the actual initial state $s_0$ at the beginning of the trajectory $\tau$), then we get $\mathcal I(s_0)=1$, from which the result follows.
\end{proof}

If we perform IRL under the assumption that our trajectory distribution follows the form of \cref{eqn:me-density-form}, then we recover the \textit{Maximum Entropy} IRL algorithm (abbreviated \emph{MaxEnt} or \emph{ME}) that was first proposed by Ziebart~\cite{ziebart2008maximum}.
As we will see in \cref{sec:dynamics-free}, this simplified algorithm lends itself well to approximation in environments with unknown (but deterministic) dynamics~\cite{finn2016guided,finn2016connection,fu2017learning}.

\subsubsection{ME IRL implies risk-seeking behaviour in stochastic environments}

Observe that the Maximum Entropy (ME) IRL discounted likelihood in \cref{eqn:me-density-form} takes a much simpler form than the likelihood induced by MCE IRL, which requires recursive evaluation of the soft VI equations.
Given that ME IRL has a simpler discounted likelihood, one might be tempted to use ME IRL in environments with stochastic dynamics by inserting dynamics factors to get a discounted likelihood of the form
\begin{equation}
\hat p_{\theta,\gamma}(\tau) = \initialstatedist(s_0)\prod_{t=0}^{T-1} \transitiondist{}(s_{t+1} \giv s_{t}, a_{t}) \exp\left(\discount^t r_\theta(s_t, a_t)\right)\,.
\end{equation}

Unfortunately, ``generalising'' ME IRL in this way may not produce appropriate behaviour in environments with truly non-deterministic dynamics.
The intuitive reason for this is that the ME IRL discounted likelihood in~\cref{eqn:me-density-form} can assign arbitrarily high likelihood to any feasible trajectory, so long as the return associated with that trajectory is high enough.
Indeed, by moving the dynamics factors into the $\exp$, we can see that the discounted likelihood\footnote{
    Previously, we defined the discounted likelihood as the ordinary trajectory likelihood with discount factors applied at each time step.
    In order to obtain a valid probability distribution, the discounted likelihood will have to be normalised.
    The reasoning in this section still applies after normalisation: as the return of a trajectory increases, its probability under the model will go to 1, even if stochastic transitions make it impossible to find a policy that actually achieves such a trajectory distribution.
} takes the form
\begin{equation}
\hat p_{\theta,\gamma}(\tau) = \exp\left(R_\theta(\tau) + \log \initialstatedist(s_0) + \sum_{t=0}^{T-1} \log \transitiondist{}(s_{t+1} \giv s_t, a_t)\right)\,,
\end{equation}
where $R_\theta(\tau) = \sum_{t=0}^{T-1} \discount^t r_\theta(s_t, a_t)$. This form suggests that the agent can simply pay a ``log cost'' in order to choose outcomes that would give it better return.
A trajectory $\tau$ with extremely high return $R_\theta(\tau)$ relative to other trajectories may have high likelihood even when the associated transition probabilities are low.
In reality, if a non-deterministic environment has a state transition $(s,a,s')$ with very low (but non-zero) probability $\epsilon$, then no trajectory containing that transition can have likelihood more than $\epsilon$ under any physically realisable policy, even if the trajectory has extremely high return.

\begin{figure}
    \centering
    \scalebox{1.2}{\begin{tikzpicture}[node distance = 0.5cm, thick]
    \node (0) [circ] {$s_0$};
    \node (1) [circ, above=0.25cm of 0, xshift=1cm]{$s_1$};
    \node (r) [draw=black, right=1cm of 0]{};
    \node (2) [circ, fill=green!10, right=1cm of r, yshift=0.7cm]{$s_2$};
    \node (3) [circ, fill=red!70, below=0.3cm of 2]{$s_3$};

    \node (rew-2) [right=of 2] {$r = 1.0$};
    \node (rew-3) [right=of 3] {$r = -100.0$};

    \draw[->] (0) to [bend left] (1);
    \draw[->] (1) to [bend left] (2);
    \draw[-] (0) to (r);
    \draw[->] (r) to [bend left] (2);
    \draw[->] (r) to [bend right] (3);

    \draw[->] (r) edge[bend left] node[pos=0.5,rotate=40,above]{$50\%$} (2);
    \draw[->] (r) edge[bend right] node[pos=0.5,rotate=-40,below]{$50\%$} (3);

    \draw[->] (2) edge [loop right] (2);
    \draw[->] (3) edge [loop right] (3);
\end{tikzpicture}}
    \caption{\texttt{RiskyPath}: The agent can either take a long but sure path to the goal ($s_0 \to s_1 \to s_2$), or attempt to take a shortcut ($s_0 \to s_2)$, with the risk of receiving a low reward ($s_0 \to s_3$). This example is simplified from Ziebart's thesis~\cite[Figure~6.4]{ziebart2010modeling}.}
    \label{fig:task:risky-path}
\end{figure}

This mismatch between real MDP dynamics and the ME IRL model can manifest as risk-taking behaviour in the presence of stochastic transitions.
For example, in \cref{fig:task:risky-path}, an agent can obtain $s_0 \to s_1 \to s_2$ with 100\% probability, or $s_0 \to s_2$ and $s_0 \to s_3$ with 50\% probability each.
The ME IRL transition model would wrongly believe the agent could obtain $s_0 \to s_2$ with arbitrarily high probability by paying a small cost of $\log \transitiondist(s_2 \giv s_0, \cdot) = \log \frac{1}{2} = -\log 2$.
It would therefore conclude that an agent given rewards $\reward(s_2) = 1$ and $\reward(s_3) = -100$ would favour the risky path for a sufficiently small discount factor $\discount{}$, despite the true expected return always being higher for the safe path.

\section{Dynamics-free approximations to ME IRL}\label{sec:dynamics-free}

Real-world environments are often too complex to be described in the tabular form required by MCE IRL.
To perform IRL in these settings, we can use ``dynamics-free'' approximations to MCE IRL which do not require a known world model.
In this section we describe several such approximations, all of which assume the environment is deterministic (simplifying MCE to ME IRL) and that the horizon is effectively infinite (simplifying to a stationary policy).

We begin by describing the assumptions made by these algorithms in more detail, and adapting our previous definitions to the non-tabular setting.
We then consider Guided Cost Learning (GCL), which directly approximates the gradient of the ME IRL objective using importance-weighted samples~\cite{finn2016guided}.
Finally, we will describe Adversarial IRL (AIRL)~\cite{fu2017learning}, a GAN-based approximation to ME IRL, that distinguishes between state--action pairs instead of distinguishing between trajectories.
AIRL has been shown to be successful in standard RL benchmark environments, including some MuJoCo-based control tasks~\cite{fu2017learning} and some Atari games~\cite{tucker2018inverse}.

\subsection{Notation and Assumptions}

\paragraph{Determinism} GCL is based on ME IRL, which \cref{sec:mce-irl:max-ent} showed is only well-founded for deterministic dynamics.
AIRL can be derived using MCE IRL, but many of the key results only hold for deterministic environments.
Thus we will assume that environments are deterministic throughout this section.

\paragraph{Function approximation}
Since we'll be discussing results about non-parametric and non-linear reward functions, we will overload our $\qsoft{}$ and $\vsoft{}$ notation from earlier to refer to arbitrary reward functions.
For example, given some reward function $r(s,a,s')$, we use $\qsoft_{r}(s,a)$ to denote the soft Q-function under $r$.
We'll also be making use of the \textit{soft advantage function}, defined as
\begin{align}
\asoft_{r}(s, a) &\triangleq \qsoft_{r}(s,a) - \vsoft_{r}(s)\,.
\end{align}

\subsubsection{Stationary policies and infinite-horizon MDPs}%
\label{ssec:inf-horizons-suck}

In addition to nonlinear function approximation, the algorithms which we will consider in the next few sections are designed to work with stationary (time-independent) policies.
Theoretically, the use of stationary policies can be justified by appealing to infinite horizons.
In an infinite horizon problem, the value of a given state is independent of the current time step, since there will always be an infinite number of time steps following it.
Consequently, the value function $\vsoft_{\theta,t}$ and Q-function $\qsoft_{\theta,t}$ lose their time-dependence, and can instead be recovered as the fixed point of the following recurrence:
\begin{align}
   \vsoft_{r}(s) &= \log \sum_{a \in \actionspace} \exp \qsoft_{r}(s, a)\,, \\
   \qsoft_{r}(s, a) &= \expect_{\mathcal T}\left[r(s,a,S') + \discount  \vsoft_{r}(S') \giv[\middle] s,a\right]\,.
\end{align}
This yields a stationary policy $\policy(a\giv{}s) = \exp \asoft_r(s,a)$.

The notion of an ``infinite'' horizon might sound abstruse, given that real-world demonstration trajectories $\tau \sim \expertdataset$ must be finite, but we can conceptually view them as being infinite trajectories padded with a virtual absorbing state.
This helps us model situations where the demonstrator always completes the task in finite time, after which the demonstration ends.

We can also use this padding approach to model simulated tasks, where episodes often end when some \emph{termination condition} is satisfied, such as a robot falling over or a vehicle reaching its destination.
For the purpose of IRL, we can again imagine such trajectories as ending with an infinite sequence of repeated absorbing states, each with the same learned absorbing state reward value.

Note the presence of a termination condition may simplify the IRL problem considerably when episode termination is (positively or negatively) correlated with task success.
Indeed, \citet{kostrikov2019bias} observe it is possible to get seemingly-competent behaviour in the Hopper environment by simply giving the agent a fixed positive reward at all time steps until termination, then zero afterwards.
In general, the absorbing state reward tends to be low in tasks which require the agent to stay ``alive'' and high in goal-oriented tasks where the agent needs to finish an episode quickly.

\paragraph{Implementation termination bias}
In practice, implementations of IRL algorithms often wrongly treat the episodes as being of finite but variable length, implicitly assuming that the absorbing state reward is exactly zero.
To make matters worse, certain choices of neural network architecture can \textit{force} a learned reward function to always be positive (or negative) up until termination.
This can lead to situations where the algorithm ``solves'' an environment regardless of what reward parameters it learns, or where the algorithm cannot ever solve the environment because its reward function has the wrong sign.
It is possible to fix this problem by explicitly learning an absorbing state reward, but for the purpose of comparing existing algorithms we instead recommend using fixed-horizon environments, which side-steps this problem entirely~\cite{seals2020environments}.

\paragraph{Convergence of soft VI}
Soft value iteration still converges to a unique value in the infinite horizon case, provided the discount factor $\discount{}$ is less than $1$~\citep[Appendix~A.2]{haarnoja2017energy}.
In particular, \citet{haarnoja2017energy}---adapting an earlier proof by \citet{fox2016taming}---show that soft value iteration is a contraction mapping.
It then follows from the Banach fixed point theorem that soft value iteration has a unique fixed point.

\subsection{Guided cost learning: Approximation via importance sampling}

Consider the gradient of the log discounted likelihood of a demonstration distribution $\mathcal D$ under the ME IRL model, which was previously shown in \cref{eqn:ll-gradient} to be
\begin{equation}
\nabla_\theta \loglikelihood{}(\expertdataset; \theta) = \underbrace{\expect_{\expertdataset{}}\left[\sum_{t=0}^{T-1} \discount^t \nabla_\theta r_\theta(S_t, A_t)\right]}_{\term{eqn:ll-grad-data}} - \underbrace{\expect_{\policy{}_\theta}\left[\sum_{t=0}^{T-1} \discount^t \nabla_\theta r_\theta(S_t, A_t)\right]}_{\term{eqn:ll-grad-partition}}\,.
\end{equation}
Sample-based approximation of \cref{eqn:ll-grad-data} is straightforward: we can just sample $N$ trajectories $\tau_1, \tau_2, \ldots, \tau_N \sim \mathcal D$ from the demonstration distribution $\mathcal D$, then approximate the expectation with sample mean
\begin{equation}
    \expect_{\expertdataset{}}\left[\sum_{t=0}^{T-1} \discount^t \nabla_\theta r_\theta(S_t, A_t)\right]
    \approx \frac{1}{N} \sum_{i=1}^N \sum_{t=0}^{T-1} \discount^t \nabla_\theta r_\theta(s_{t,i}, a_{t,i})\,.
\end{equation}
Approximating \cref{eqn:ll-grad-partition} is harder, since recovering $\pi_\theta$ requires planning against the estimated reward function $r_\theta$.
Although we could compute $\pi_\theta$ using maximum entropy RL~\cite{haarnoja2017energy}, doing so would be extremely expensive, and we would have to repeat the process each time we update the reward parameters $\theta$.

\paragraph{Importance sampling} Instead of learning and sampling from $\pi_\theta$, we can estimate the ME IRL gradient using importance sampling.
Importance sampling is a widely applicable technique for working with distributions that are difficult to sample from or evaluate densities for.
Imagine that we would like  to evaluate an expectation $\expect_p f(X)$, where $X$ is a random variable with density $p$ and $f : \mathcal X \to \mathbb R$ is some function on the sample space $\mathcal X$.
Say that the distribution $p$ is intractable to sample from, and we only know how to evaluate an unnormalised density $\tilde p(x) = \alpha p(x)$ with unknown scaling factor $\alpha$.
However, imagine that it is still possible to draw samples from another distribution $q$ for which we can evaluate the density $q(x)$ at any point $x \in \mathcal X$.
In this case, we can rewrite our expectation as:
\begin{align}
    \expect_p \left[f(X)\right] &= \int \! f(x) p(x) \, dx \\
    &= \int \! f(x) \frac{p(x)}{q(x)} q(x) \, dx \\
    &= \frac{1}{\alpha} \int \! f(x) \frac{\tilde p(x)}{q(x)} q(x) \, dx \\
    &= \frac{1}{\alpha} \expect_q\left[\frac{\tilde p(X)}{q(X)} f(X)\right]~.
\end{align}
Similarly, we can evaluate $\alpha$ using an expectation with respect to $q$:
\begin{align}
\alpha &= \alpha \int \! p(x) \, dx \\
&= \int \! \tilde p(x) \, dx \\
&= \int \! \frac{\tilde p(x)}{q(x)} q(x) \, dx \\
&= \expect_q \left[\frac{\tilde p(X)}{q(X)}\right]~.
\end{align}
If we let $w(x) = \tilde p(x) / q(x)$, then we have
\begin{equation}\label{eqn:is-general-formula}
    \expect_p \left[f(X)\right] = \frac{\expect_q \left[w(X) f(X)\right]}{\expect_q\left[w(X)\right]}~.
\end{equation}
Both the numerator and the denominator of \cref{eqn:is-general-formula} can be approximated by averaging $w(X) f(X)$ and $w(X)$ on samples drawn from $q$.
This removes the need to sample from $p$ or evaluate its density directly.

\paragraph{Applying importance sampling to IRL} Now consider how we could use importance sampling to estimate \cref{eqn:ll-grad-partition}.
In this case, we wish to sample from the distribution induced by $\pi_\theta$, which has discounted likelihood $p_{\theta,\gamma}(\tau) = \frac{1}{Z} \exp r_\theta(\tau)$, for $Z = \int\!\exp r_\theta(\tau)\,d\tau$ integrated over all feasible trajectories $\tau$.\footnote{
  Note that if $\gamma=1$, then $p_{\theta,\gamma}$ will be undefined over infinite horizons for some MDPs, since $r_\theta(\tau)$ will diverge. On the other hand, if $\gamma<1$, then $p_{\theta,\gamma}$ will not be a normalised density unless we incorporate an additional $\gamma$-dependent factor into the $Z(\theta)$ we derived in \cref{lem:me-irl}.
  We will ignore these subtleties for the remainder of the section by implicitly assuming that the horizon is finite, but that the policy is still stationary.
}
Again, computing $Z$ is intractable---doing so is equivalent to optimally solving an entropy-regularised RL problem.
Instead of evaluating $Z$ directly, we can use importance sampling: if there is some easy-to-compute trajectory distribution $q(\tau)$ that we \textit{can} draw samples from, then we can rewrite the expectation in \cref{eqn:ll-grad-partition} as\footnote{To avoid confusion with the horizon $T$, we are treating lowercase $\tau$ as a random variable in these expectations.}
\begin{equation}
\expect_{\policy{}_\theta}\left[\sum_{t=0}^{T-1} \discount^t \nabla_\theta r_\theta(S_t, A_t)\right]
= \frac{\expect_{\tau \sim q}\left[w_\theta(\tau) \sum_{t=0}^{T-1} \discount^t \nabla_\theta r_\theta(S_t, A_t)\right]}{\expect_{\tau \sim q} w_\theta(\tau)}\,,
\end{equation}
where $w_\theta(\tau) = \left(\exp r_\theta(\tau)\right) / q(\tau)$ denotes the importance weighting function.

The main insight of Guided Cost Learning (GCL) is that the proposal distribution $q(\tau)$ can be iteratively updated to bring it \textit{close} to the optimal proposal distribution $p_\theta(\tau)$ without having to compute $p_\theta(\tau)$ exactly~\cite{finn2016guided}.
Specifically, after each cost function update, the proposal distribution $q(\tau)$ is updated by using reinforcement learning to maximise the entropy-regularised return,
\begin{equation}
    R_H(q) = \expect_q\left[\sum_{t=0}^{T-1} \gamma^t r_\theta(S_t, A_t)\right] + H(q)~.
\end{equation}
Since GCL uses importance sampling to evaluate the ME IRL gradient, it does not need to train $q$ to optimality against $R_H(q)$.
Instead, doing a few RL steps after each cost function update can suffice to produce a good proposal distribution $q$.

In addition to the core insight that an adaptive proposal distribution can improve importance-sampled gradient estimates in ME IRL, the GCL paper also proposes a number of tricks that are necessary to make the method work in practice.
Most significantly, the actual proposal distribution $q(\tau)$ used in the GCL paper is not just a policy trained with RL, but a mixture between an RL-trained policy and an approximation of the expert trajectory distribution.
Moreover, the learnt reward function uses an architecture and pair of regularisers that make it particularly well-suited to the goal-reaching tasks that GCL was evaluated on.
In the remainder of this primer, we will introduce the Adversarial IRL algorithm~\citep{fu2017learning}, which does not rely on importance sampling and does not require as many special tricks to work well in practice.

\subsection{An interlude on generative adversarial networks}

In the next sub-section, we will see that ME IRL can be recast as a way of learning a specially-structured kind of Generative Adversarial Network (GAN).
This sub-section will briefly review the basic principles behind GANs.
Readers unfamiliar with GANs may wish to consult the original GAN paper~\cite{goodfellow2014gan}.

GANs include a generator function $x = G(z)$ that maps from a noise vector $z$ to a generated sample $x \in \mathcal X$, for some output space $\mathcal X$.
Given some fixed noise density $p_n(z)$ (e.g.\ from a unit Gaussian distribution), we define $p_g(x)$ to represent the density over $\mathcal X$ induced by sampling $Z \sim p_n$ and then computing $X = G(Z)$.
The overarching objective of GAN training is to ensure that $p_g = p_{\text{data}}$, i.e.\ to have $X$ match the true data distribution.

In GAN training, we alternate between training a \textit{discriminator} $D : \mathcal X \to [0,1]$ to distinguish between $p_g$ and $p_{\text{data}}$, and training the generator $G(z)$ to produce samples that appear ``real'' to $D(x)$.
Specifically, $D(x)$ is treated as a classifier that predicts the probability that $x$ is a sample from $p_{\text{data}}$ instead of $p_g$.
$D(x)$ is trained to minimise the cross-entropy loss
\begin{align}\label{eqn:gan-discrim-objective}
    \loss{}_D = -\expect_{Z \sim p_z}\left[ \log (1 - D(G(Z))) \right] - \expect_{X \sim p_{\text{data}}}\left[ \log D(X) \right]\,.
\end{align}
In tandem, $G$ is trained to \textit{maximise} $L_D$---that is, we want to solve $\max_G \min_D L_D$.
If $D$ can be an arbitrary function, then $\min_D L_D$ is attained at~\cite[Proposition~1]{goodfellow2014gan}:
\begin{equation}
\labelAndRemember{eq:gan-optimal-discriminator}{
\discriminator_G^*(x) = \frac{p_{\text{data}}(x)}{p_{\text{data}}(x) + p_g(x)}}\,.
\end{equation}
It is known that $G \in \argmax_G \min_D L_D$ if and only if $p_g = p_{\text{data}}$~\citep[Theorem~1]{goodfellow2014gan}. As a proof sketch, if $p_g = p_{\text{data}}$ then the loss is $\loss{}_D = \log 4$. Moreover, it can be shown that $\min_D \loss{}_D$ is equal to $\log 4 - 2 \cdot \operatorname{JSD}(p_{\text{data}} \| p_g)$, where $\operatorname{JSD}(p \| q)$ denotes the Jensen-Shannon divergence between distributions $p$ and $q$. Since the Jensen-Shannon divergence is always non-negative and becomes zero if and only if $p_{\text{data}} = p_g$, it follows that $\log 4$ is the optimal value of $\max_G \min_D L_D$, and that $p_g = p_{\text{data}}$ is the only choice of $G$ that attains this optimum.

\subsection{Adversarial IRL: A state-centric, GAN-based approximation}

GANs can be used to produce an IRL method similar to GCL, called GAN-GCL~\citep{finn2016connection}.
However, GAN-GCL obtains poor performance in practice~\cite[Table~2]{fu2017learning}.
In particular, performing discrimination over entire \emph{trajectories} $\tau$ can complicate training.

Adversarial IRL (AIRL)~\cite{fu2017learning} instead discriminates between state-action pairs.
Concretely, it learns a (stationary) stochastic policy $\policy{}(a \giv s)$ and reward function $f_\theta(s, a)$.
The AIRL discriminator is defined by:
\begin{equation}
\discriminator{}_{\theta}(s,a) = \frac{\exp f_{\theta}(s, a)}{\exp f_{\theta}(s, a) + \policy{}(a \giv s)}\,.
\end{equation}
The generator $\policy{}$ is trained with the discriminator confusion:
\begin{equation}
\labelAndRemember{eq:airl:generator-reward}{\hat{r}_{\theta}(s,a) \triangleq \log \discriminator_{\theta}(s,a) - \log\left(1 - \discriminator_{\theta}(s,a)\right)}\,.
\end{equation}
The discriminator is trained with cross-entropy loss, updating only the parameters $\theta$ corresponding to the reward function $f_{\theta}$:
\begin{equation}
\label{eq:airl:discriminator-loss}
\discriminatorloss(\theta) \triangleq -\expect_{\policy{}}\left[\sum_{t=0}^{T-1} \log\left(1 - \discriminator{}_{\theta}(S_t,A_t)\right)\right] - \expect_{\expertdataset{}} \left[\sum_{t=0}^{T-1} \log \discriminator{}_{\theta}(S_t,A_t)\right]\,.
\end{equation}

Recall we are working in the setting of \emph{infinite} horizon MDPs, since AIRL (and GCL) assume stationary policies.
Yet the summation in \cref{eq:airl:discriminator-loss} is undiscounted, and would usually diverge if $T = \infty$.
For AIRL, we must therefore additionally assume that the transition dynamics are \emph{proper}, such that the MDP will almost surely enter an absorbing state in finite time.
In \cref{eq:airl:discriminator-loss}, the trajectory fragments $\tau$ consist of the finite-length component prior to entering an absorbing state, ensuring the loss is well-defined.
Unfortunately, omitting the absorbing state from $\tau$ also means the discriminator---and therefore reward function---cannot learn an absorbing-state reward.

In the following sections, we summarise the key theoretical results of AIRL. To make it clear where assumptions are being used, we state intermediate lemmas with minimal assumptions. However, all theorems require deterministic dynamics, and the theorems in Section~\ref{sec:airl:advantage-recovery} additionally rely on the dynamics being ``decomposable''.

\subsubsection[Policy objective is entropy-regularised f]{Policy objective is entropy-regularised $f_{\theta}$}
The reward $\hat{r}_{\theta}$ used to train the generator $\policy(a \giv s)$ is a sum of two widely used GAN objectives: the minimax objective $-\log\left(1 - D_{\theta}(s,a)\right)$ and the more widely used $\log D_{\theta}(s,a)$ objective.
Combining the two objectives is reasonable since both objectives have the same fixed point.
Moreover, the combined objective has a pleasing interpretation in terms of $f_{\theta}$:
\begin{equation}
\hat{r}(s,a) = f_{\theta}(s,a) - \log \policy{}(a \giv s)\,.
\end{equation}

Summing over entire trajectories, we obtain the entropy-regularised policy objective:
\begin{equation}
\label{eq:airl:policy-objective}
\expect_{\policy{}} \left[\sum_{t=0}^{T-1} \discount^t \hat{r}(S_t, A_t)\right] = \expect_{\policy{}}\left[\sum_{t=0}^{T-1} \discount^t \left(f_{\theta}(S_t, A_t) - \log \policy{}(A_t \giv S_t)\right)\right]\,.
\end{equation}
It is known that the policy $\policy$ that maximises this objective is~\cite{ziebart2010modeling,haarnoja2017energy}:
\begin{equation}
\policy(a \giv s) = \exp\left(\qsoft_{f_\theta}(s, a) - \vsoft_{f_\theta}(s)\right) = \exp \asoft_{f_\theta}(s, a)\,.
\end{equation}

\subsubsection[f recovers the optimal advantage]{$f_{\theta}(s,a)$ recovers the optimal advantage}
\label{sec:airl:advantage-recovery}
Recall that for a fixed generator $G$, the optimal discriminator $\discriminator$ is
\begin{equation}
\recallLabel{eq:gan-optimal-discriminator},
\end{equation}
where $p_\text{data}(x)$ is the true data distribution (in our case, expert trajectories), and $p_g(x)$ is the distribution induced by the generator (in our case, policy $\policy{}$).

Normally in GAN training, it is computationally efficient to \textit{sample} from the generator distribution $p_g$ but expensive to evaluate the \textit{density} $p_g(x)$ of a given sample.
Fortunately, in AIRL (like GAN-GCL), density evaluation is cheap since the generator $p_g(x)$ is defined by a stochastic policy $\policy{}(a \giv s)$, which explicitly defines a distribution.

Suppose the generator is always pitted against the optimal discriminator in \cref{eq:gan-optimal-discriminator}.
It is known that the generator maximizes the loss in \cref{eq:airl:discriminator-loss} of the optimal discriminator if and only if $p_g(x) = p_{\text{data}}(x)$.
In our case this is attained when the generator $\policy{}(a \giv s)$ is equal to the expert policy $\policy{}_E(a \giv s)$.

The optimal discriminator for an optimal generator will always output $\frac{1}{2}$: i.e.\ $f^*_{\theta}(s, a) = \log \policy{}_E(a \giv s)$.
Moreover, the expert policy $\policy{}_E$ is the optimal maximum causal entropy policy for the MDP's reward function $r$, so $\policy{}_E(a \giv s) = \exp\asoft_r(s,a)$.
So, if the GAN converges, then at optimality $f^*_{\theta}(s, a) = \asoft_r(s, a)$ and $\policy{}(a \giv s) = \policy{}_E(a \giv s)$.

\subsubsection{Reward shaping in MCE RL}
\label{sec:airl:shaping}

In this section, we will introduce a classical result of (hard) optimal policy equivalence under potential shaping due to \citet{ng1999policy}, and then generalise it to the case of maximum entropy (soft) optimal policies.

\begin{defn}
Let $r$ and $r'$ be two reward functions. We say $r$ and $r'$ \emph{induce the same hard optimal policy} under transition dynamics $\transitiondist$ if, for all states $s \in \statespace{}$:
\begin{equation}
\arg\max_a Q_{r, \transitiondist}(s, a) = \arg\max_a Q_{r', \transitiondist}(s, a)\,.
\end{equation}
\end{defn}

\begin{theorem}
Let $r$ and $r'$ be two reward functions. $r$ and $r'$ induce the same hard optimal policy under all transition dynamics $\transitiondist$ if:
\begin{equation}
r'(s, a, s') = \lambda\left(r(s,a,s') + \discount\phi(s') - \phi(s)\right)\,,
\end{equation}
for some $\lambda > 0$ and potential-shaping function $\phi:\statespace \to \mathbb{R}$.
\end{theorem}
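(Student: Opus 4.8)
The plan is to reduce the claim to a single algebraic identity relating the optimal $Q$-functions of $r$ and $r'$. Specifically, I would show that for every state $s$ and action $a$,
\[
Q_{r',\transitiondist}(s,a) = \lambda\left(Q_{r,\transitiondist}(s,a) - \phi(s)\right).
\]
Granting this, the theorem is immediate: since $\lambda > 0$ the rescaling preserves which action is largest, and the additive term $-\lambda\phi(s)$ is the same for every action $a$, so $\argmax_a Q_{r',\transitiondist}(s,a) = \argmax_a Q_{r,\transitiondist}(s,a)$, which is exactly the definition of inducing the same hard optimal policy. This holds for an arbitrary $\transitiondist$, as required.

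To establish the identity I would use uniqueness of the fixed point of the Bellman optimality operator. Substitute the ansatz $\tilde Q(s,a) = \lambda(Q_{r,\transitiondist}(s,a) - \phi(s))$ into the optimality equation for $r'$,
\[
\tilde Q(s,a) \overset{?}{=} \expect_{\transitiondist}\!\left[r'(s,a,S') + \discount \max_{a'} \tilde Q(S',a') \giv s,a\right],
\]
expand $r'(s,a,s') = \lambda\bigl(r(s,a,s') + \discount\phi(s') - \phi(s)\bigr)$, and pull the term $-\discount\phi(S')$ out of $\max_{a'}$ (legitimate because it does not depend on $a'$). The $+\discount\phi(S')$ contributed by the shaped reward then cancels this $-\discount\phi(S')$, and what remains is $\lambda\bigl(\expect_{\transitiondist}[r(s,a,S') + \discount\max_{a'} Q_{r,\transitiondist}(S',a') \giv s,a] - \phi(s)\bigr) = \lambda(Q_{r,\transitiondist}(s,a) - \phi(s)) = \tilde Q(s,a)$, using the optimality equation for $r$. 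So $\tilde Q$ is a fixed point of the optimality operator for $r'$; since that operator is a $\discount$-contraction when $\discount < 1$, the fixed point is unique and equals $Q_{r',\transitiondist}$. An equivalent and perhaps more transparent route is direct telescoping: along any trajectory $\sum_t \discount^t(\discount\phi(s_{t+1}) - \phi(s_t))$ collapses to $-\phi(s_0)$ (up to a vanishing boundary term), so the return under $r'$ from $s_0$ equals $\lambda$ times the return under $r$ minus $\lambda\phi(s_0)$; taking the conditional expectation given the first state--action pair and an arbitrary policy $\policy$ gives $Q^{\policy}_{r',\transitiondist}(s,a) = \lambda\bigl(Q^{\policy}_{r,\transitiondist}(s,a) - \phi(s)\bigr)$ for every $\policy$, and maximising over $\policy$ yields the identity for the optimal $Q$.

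The result is classical and the argument is short, so there is no deep obstacle; the only thing to watch is boundary bookkeeping. In the discounted infinite-horizon setting the telescope leaves a term $\discount^{T}\phi(s_T) \to 0$ and the contraction/fixed-point argument goes through cleanly, exactly as the soft VI convergence argument invoked earlier in this primer. In a strict finite-horizon formulation one instead does backward induction on the time-indexed $Q_{r',t}$ and is left with a residual terminal $\phi(s_T)$ term; this is harmless because it is constant in the action being maximised, but it should be noted explicitly. The essential point throughout is that every extra term $\phi$ introduces is independent of the maximising action, and that $\lambda > 0$ prevents the maximiser from flipping to a minimiser — those two facts together are what make the $\argmax$ genuinely invariant.
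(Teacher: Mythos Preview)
Your argument is correct and is exactly the classical one: show $Q_{r',\transitiondist}(s,a)=\lambda\bigl(Q_{r,\transitiondist}(s,a)-\phi(s)\bigr)$ via the Bellman fixed point (or equivalently the telescoping-return calculation), then observe that a positive rescaling plus an action-independent shift preserves the $\argmax$. The paper does not actually prove this theorem inline---its proof consists solely of pointers to Ng et al.\ (1999) and Gleave et al.\ (2020)---and your write-up is essentially the standard derivation those references contain, so there is nothing to compare beyond noting that you have supplied what the paper outsourced.
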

\begin{proof}
See \cite{ng1999policy,gleave2020quantifying}.
\end{proof}

\begin{defn}
Let $r$ and $r'$ be two reward functions. We say they \emph{induce the same soft optimal policy} under transition dynamics $\transitiondist$ if, for all states $s \in \statespace{}$ and actions $a \in \actionspace{}$:
\begin{equation}
\asoft_{r, \transitiondist}(s, a) = \asoft_{r', \transitiondist}(s, a)\,.
\end{equation}
\end{defn}

\begin{theorem}
\label{thm:soft-equivalent-shaping}
Let $r$ and $r'$ be two reward functions. $r$ and $r'$ induce the same soft optimal policy under all transition dynamics $\transitiondist$ if $r'(s, a, s') = r(s,a, s') + \discount\phi(s') - \phi(s)$ for some potential-shaping function $\phi:\statespace \to \mathbb{R}$.
\end{theorem}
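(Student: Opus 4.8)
The plan is to work directly with the soft Bellman fixed-point equations for the stationary, infinite-horizon setting of \cref{ssec:inf-horizons-suck}, and to show that potential shaping by $\phi$ merely translates both $\vsoft$ and $\qsoft$ by $-\phi(\state)$, so that the advantage $\asoft = \qsoft - \vsoft$ is unchanged. The guiding ansatz is $\vsoft_{r',\transitiondist}(\state) = \vsoft_{r,\transitiondist}(\state) - \phi(\state)$ and $\qsoft_{r',\transitiondist}(\state,\action) = \qsoft_{r,\transitiondist}(\state,\action) - \phi(\state)$.

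First I would verify that this shifted pair actually solves the soft Bellman equations for $r'$. The $\vsoft$ equation is immediate: $\log\sum_{\action}\exp\big(\qsoft_{r,\transitiondist}(\state,\action) - \phi(\state)\big) = -\phi(\state) + \log\sum_{\action}\exp\qsoft_{r,\transitiondist}(\state,\action) = -\phi(\state) + \vsoft_{r,\transitiondist}(\state)$, since the constant $-\phi(\state)$ factors out of the log-sum-exp. For the $\qsoft$ equation, I substitute $r'(\state,\action,\nextstate) = r(\state,\action,\nextstate) + \discount\phi(\nextstate) - \phi(\state)$ together with the shifted $\vsoft$; inside $\expect_{\transitiondist}[\,\cdot \giv \state,\action]$ the two copies of $\discount\phi(\nextstate)$ cancel, leaving $\expect_{\transitiondist}[r(\state,\action,\nextstate) + \discount\vsoft_{r,\transitiondist}(\nextstate)\giv \state,\action] - \phi(\state) = \qsoft_{r,\transitiondist}(\state,\action) - \phi(\state)$, as required. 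Note this cancellation is exactly the place where the specific $\discount\phi(\nextstate) - \phi(\state)$ form of potential shaping is used, and it holds for every choice of $\transitiondist$, which is what gives the ``under all transition dynamics'' clause.

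Next I would appeal to uniqueness of the soft Bellman fixed point. Because $\discount < 1$, soft value iteration is a contraction mapping (the result of \citet{haarnoja2017energy}, adapting \citet{fox2016taming}, quoted at the end of \cref{ssec:inf-horizons-suck}), so by the Banach fixed point theorem its fixed point is unique. Hence the shifted pair is not merely \emph{a} solution but \emph{the} solution: $\vsoft_{r',\transitiondist} = \vsoft_{r,\transitiondist} - \phi$ and $\qsoft_{r',\transitiondist} = \qsoft_{r,\transitiondist} - \phi$. Subtracting these, the $\phi(\state)$ terms cancel and $\asoft_{r',\transitiondist}(\state,\action) = \qsoft_{r',\transitiondist}(\state,\action) - \vsoft_{r',\transitiondist}(\state) = \qsoft_{r,\transitiondist}(\state,\action) - \vsoft_{r,\transitiondist}(\state) = \asoft_{r,\transitiondist}(\state,\action)$ for all $\state,\action$, which is the claim; the corresponding statement about policies, $\policy(a\giv s) = \exp\asoft_r(s,a)$, then follows too.

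The main obstacle is really just the appeal to uniqueness: exhibiting the shifted solution is routine algebra, but it is uniqueness (and hence the standing assumption $\discount < 1$) that lets us conclude the shaped reward induces \emph{exactly} this shifted value function rather than some other fixed point. I would flag one further subtlety: in a strict finite-horizon formulation one would instead push the same cancellation through the backward recursion of \cref{eqn:policy-soft-vi}, where uniqueness is automatic, but there the terminal step $\qsoft_{\theta,T-1} = \theta^\top\phi$ carries an uncompensated $\discount\phi(\nextstate)$ boundary term, so the equivalence is cleanest exactly in the infinite-horizon / absorbing-state setting that this section assumes. I would note this rather than attempt to patch the finite-horizon boundary case here.
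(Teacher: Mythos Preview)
Your proposal is correct and follows essentially the same approach as the paper: both arguments show that the $\phi$-shifted soft $Q$-function satisfies the soft Bellman backup for the other reward and then (implicitly in the paper, explicitly in your write-up) invoke uniqueness of the fixed point to conclude $\qsoft_{r,\transitiondist}(s,a) = \qsoft_{r',\transitiondist}(s,a) + \phi(s)$, from which the equality of advantages drops out. The only cosmetic difference is direction---the paper starts from $\qsoft_{r'}$ and shows $\qsoft_{r'}+\phi$ is the fixed point for $r$, whereas you start from $\qsoft_r$ and show $\qsoft_r-\phi$ is the fixed point for $r'$---and your version is more careful in naming the contraction/uniqueness step and in flagging the finite-horizon boundary subtlety, both of which the paper leaves tacit.
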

\begin{proof}

We have:
\begin{equation}
\qsoft_{r',\transitiondist}(s,a) = \expect_{\transitiondist}\left[r(s,a, S') + \discount\phi(S') - \phi(s) + \discount\vsoft_{r',\transitiondist}(S') \giv[\middle]{} s, a\right]\,.
\end{equation}
So:
\begin{align}
\qsoft_{r', \transitiondist}(s,a) + \phi(s) &= \expect_{\transitiondist}\vast[r(s,a, S') + \discount\log \sum_{a' \in \actionspace} \exp\left(\qsoft_{r',\transitiondist}(S', a') + \phi(S')\right) \giv[\vast]{} s, a\vast]\,.
\end{align}
Thus $\qsoft_{r',\transitiondist}(s,a) + \phi(s)$ satisfies the soft Bellman backup for $r$, so:
\begin{equation}
\qsoft_{r, \transitiondist}(s, a) = \qsoft_{r', \transitiondist}(s,a) + \phi(s)\,.
\end{equation}
It follows that the optimal advantage is invariant to shaping:
\begin{align}
\asoft_{r', \transitiondist}(s, a) &= \qsoft_{r', \transitiondist}(s, a) - \vsoft_{r', \transitiondist}(s) \\
&= \qsoft_{r', \transitiondist}(s, a) - \log \sum_{a \in \actionspace} \exp\left(\qsoft_{r', \transitiondist}(s, a)\right) \\
&= \qsoft_{r, \transitiondist}(s, a) + \phi(s) - \log \sum_{a \in \actionspace} \exp\left(\qsoft_{r, \transitiondist}(s, a) + \phi(s)\right) \\
&= \qsoft_{r, \transitiondist}(s, a) - \log \sum_{a \in \actionspace} \exp\left(\qsoft_{r, \transitiondist}(s, a)\right) \\
&= \asoft_{r, \transitiondist}(s, a)\,.
\end{align}
\end{proof}

\begin{remark}
Note that rescaling $r$ by $\lambda \neq 1$ changes the soft optimal advantage function and, therefore, the soft-optimal policy. In particular, it approximately rescales the soft optimal advantage function (this is not exact as log-sum-exp is non-linear). Rescaling will therefore tend to have the effect of making the soft-optimal policy less ($\lambda > 1)$ or more ($\lambda < 1$) stochastic.
\end{remark}

\subsubsection{Discriminator objective}
In this section, we show that minimising the loss of the discriminator corresponds to ME IRL in deterministic dynamics when $f_{\theta}$ is already an advantage for some reward function.

\begin{theorem}
Consider an undiscounted, deterministic MDP. Suppose $f_{\theta}$ and $\policy_{\theta}$ are the soft-optimal advantage function and policy for reward function $r_{\theta}$. Then minimising the cross-entropy loss of the discriminator under generator $\policy_{\theta}$ is equivalent to maximising the log-likelihood of observations under the Maximum Entropy (ME) IRL model.

Specifically, recall that the gradient of ME IRL (with $\discount = 1$) log likelihood is
\begin{equation*}
\nabla_\theta \loglikelihood{}(\expertdataset; \theta) = {\expect_{\expertdataset{}}\left[\sum_{t=0}^{T-1} \nabla_\theta r_\theta(S_t, A_t)\right]} - {\expect_{\policy{}_\theta}\left[\sum_{t=0}^{T-1} \nabla_\theta r_\theta(S_t, A_t)\right]}\,.
\end{equation*}
We will show that the gradient of the discriminator objective is
\begin{equation}
-2 \nabla_{\theta} \discriminatorloss(\theta) = \expect_{\expertdataset}\left[\sum_{t=0}^{T-1} \nabla_{\theta} f_{\theta}(S_t, A_t)\right] - \expect_{\policy{}_{\theta}}\left[\sum_{t=0}^{T-1} \nabla_{\theta} f_{\theta}(S_t, A_t)\right]\,.
\end{equation}
\end{theorem}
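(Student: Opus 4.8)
The plan is to substitute the explicit AIRL discriminator $\discriminator_{\theta}(s,a) = \exp f_{\theta}(s,a)\,/\,\big(\exp f_{\theta}(s,a) + \policy(a\giv s)\big)$ into $\discriminatorloss(\theta)$ and differentiate. Recall from the AIRL setup that the discriminator update changes only the reward parameters $\theta$ inside $f_{\theta}$, so the generator $\policy$ — appearing both as the distribution in $\expect_{\policy}$ and as the summand in the denominator of $\discriminator_{\theta}$ — is held fixed throughout the differentiation. Writing $\log \discriminator_{\theta}(s,a) = f_{\theta}(s,a) - \log\big(\exp f_{\theta}(s,a) + \policy(a\giv s)\big)$ and $\log\big(1 - \discriminator_{\theta}(s,a)\big) = \log \policy(a\giv s) - \log\big(\exp f_{\theta}(s,a) + \policy(a\giv s)\big)$, the only $\theta$-dependence in $\discriminatorloss(\theta)$ is through $f_{\theta}$ itself and through the shared log-normaliser $\log(\exp f_{\theta} + \policy)$.

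Next I would use $\nabla_{\theta}\log\big(\exp f_{\theta}(s,a) + \policy(a\giv s)\big) = \discriminator_{\theta}(s,a)\,\nabla_{\theta} f_{\theta}(s,a)$ and assemble the gradient term by term: the demonstrator term $-\expect_{\expertdataset}[\sum_t \log \discriminator_{\theta}]$ yields $-\expect_{\expertdataset}[\sum_t (1 - \discriminator_{\theta})\,\nabla_{\theta} f_{\theta}]$, and the generator term $-\expect_{\policy}[\sum_t \log(1-\discriminator_{\theta})]$ yields $+\expect_{\policy}[\sum_t \discriminator_{\theta}\,\nabla_{\theta} f_{\theta}]$, so that
\begin{equation*}
\nabla_{\theta}\discriminatorloss(\theta) = \expect_{\policy}\!\left[\sum_{t=0}^{T-1}\discriminator_{\theta}(S_t,A_t)\,\nabla_{\theta} f_{\theta}(S_t,A_t)\right] - \expect_{\expertdataset}\!\left[\sum_{t=0}^{T-1}\big(1 - \discriminator_{\theta}(S_t,A_t)\big)\,\nabla_{\theta} f_{\theta}(S_t,A_t)\right].
\end{equation*}

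Finally I would specialise the generator to $\policy = \policy_{\theta}$. Since $f_{\theta} = \asoft_{r_{\theta}}$ and the soft-optimal policy for $r_{\theta}$ is $\policy_{\theta}(a\giv s) = \exp\asoft_{r_{\theta}}(s,a)$ (from the "policy objective is entropy-regularised $f_{\theta}$" result), we have $\policy_{\theta}(a\giv s) = \exp f_{\theta}(s,a)$; then the numerator and the policy summand in $\discriminator_{\theta}$ coincide, giving $\discriminator_{\theta}(s,a) = \tfrac{1}{2}$ (and $1-\discriminator_{\theta}(s,a) = \tfrac{1}{2}$) on the support of both expectations. Substituting yields $\nabla_{\theta}\discriminatorloss(\theta) = \tfrac{1}{2}\expect_{\policy_{\theta}}[\sum_t \nabla_{\theta} f_{\theta}] - \tfrac{1}{2}\expect_{\expertdataset}[\sum_t \nabla_{\theta} f_{\theta}]$, i.e.\ $-2\nabla_{\theta}\discriminatorloss(\theta) = \expect_{\expertdataset}[\sum_t \nabla_{\theta} f_{\theta}] - \expect_{\policy_{\theta}}[\sum_t \nabla_{\theta} f_{\theta}]$, which is exactly the ME IRL gradient with $\discount = 1$ (with $f_{\theta}$ playing the role of $r_{\theta}$). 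The main thing to be careful about is the bookkeeping of which occurrences of $\theta$ and of the policy are being differentiated — in particular, that the generator is frozen during the discriminator step, so that no score-function term for $\policy_{\theta}$ arises — together with noting that it is the undiscounted, deterministic setting that makes this coincide with the ME IRL objective, and that finiteness of the undiscounted $\sum_t$ relies on the properness assumption noted when $\discriminatorloss$ was defined.
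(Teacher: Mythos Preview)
Your proposal is correct and follows the same overall structure as the paper: differentiate $\discriminatorloss(\theta)$ through the explicit form of $\discriminator_\theta$, then argue that $\discriminator_\theta \equiv \tfrac{1}{2}$ when the generator is the soft-optimal policy, which collapses the gradient to the stated expression.

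The one noteworthy difference is in how you obtain $\policy_\theta(a\giv s) = \exp f_\theta(s,a)$. You read this off directly from the hypotheses: $\policy_\theta$ is soft-optimal for $r_\theta$, hence $\policy_\theta = \exp \asoft_{r_\theta} = \exp f_\theta$. The paper instead takes a longer route: it first observes that in deterministic dynamics $f_\theta = \asoft_{r_\theta}$ is a potential-shaped version of $r_\theta$, then invokes the shaping invariance result (\cref{thm:soft-equivalent-shaping}) to conclude $\asoft_{f_\theta} = \asoft_{r_\theta} = f_\theta$, i.e.\ that $f_\theta$ is idempotent under the soft-advantage operator, and hence that the policy trained against $f_\theta$ as a reward is $\exp f_\theta$. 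Your shortcut is cleaner given the theorem's stated assumptions; the paper's detour buys the additional (and practically relevant) observation that the AIRL generator---which is trained against $f_\theta$, not $r_\theta$---is consistent with the hypothesis, and it is also where the deterministic-dynamics assumption is actually used in the derivation rather than only in the interpretation as ME IRL.
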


\begin{proof}
Note that $\discriminatorloss(\theta)$ is the discriminator loss, and so we wish to maximise the discriminator objective $-\discriminatorloss(\theta)$. This has the gradient:
\begin{equation}
-\nabla_{\theta} \discriminatorloss(\theta) = \expect_{\expertdataset{}} \left[\sum_{t=0}^{T-1} \nabla_{\theta} \log \discriminator{}_{\theta}(S_t,A_t,S_{t+1})\right] + \expect_{\policy{}}\left[\sum_{t=0}^{T-1} \nabla_{\theta} \log\left(1 - \discriminator{}_{\theta}(S_t,A_t,S_{t+1})\right)\right] \,.
\end{equation}
Observe that:
\begin{equation}
\log \discriminator_{\theta}(s_t, a_t, s_{t+1}) = f_{\theta}(s_t, a_t) - \log \left(\exp f_{\theta}(s_t, a_t) + \policy{}(a_t \giv s_t)\right)\,.
\end{equation}
Thus:
\begin{equation}
\nabla_{\theta} \log \discriminator_{\theta}(s_t, a_t) = \nabla_{\theta} f_{\theta}(s_t, a_t) - \frac{\exp\left(f_{\theta}(s_t, a_t)\right) \nabla_{\theta} f_{\theta}(s_t, a_t)}{\exp f_{\theta}(s_t, a_t) + \policy{}(a_t \giv s_t)}\,.\label{eqn:airl-logd-gradient}
\end{equation}
Similarly:
\begin{equation}
\log \left(1 - \discriminator_{\theta}(s_t, a_t)\right) = \policy(a_t \giv s_t) - \log \left(\exp f_{\theta}(s_t, a_t) + \policy{}(a_t \giv s_t)\right)\,.
\end{equation}
So:
\begin{equation}
\nabla_{\theta} \log \left(1 - \discriminator_{\theta}(s_t, a_t)\right)) = -\frac{\exp\left(f_{\theta}(s_t, a_t)\right) \nabla_{\theta} f_{\theta}(s_t, a_t)}{\exp f_{\theta}(s_t, a_t) + \policy{}(a_t \giv s_t)}\,.\label{eqn:airl-logminusd-gradient}
\end{equation}

Recall we train the policy $\policy{}(a_t \giv s_t)$ to maximise \cref{eq:airl:policy-objective}.
The optimal maximum entropy policy for a given $f_{\theta}$ is $\policy^*_{f_{\theta}}(a_t \giv s_t) = \exp \asoft_{f_{\theta}}(s_t, a_t)$.

By assumption, $f_{\theta}$ is the advantage for reward function $r_{\theta}$, so:
\begin{align}
f_{\theta}(s, a) &= \asoft_{r_{\theta}}(s, a) \\
&= \qsoft_{r_{\theta}}(s, a) - \vsoft_{r_{\theta}}(s) \\
&= \sum_{s' \in \statespace} \transitiondist(s,a,s')\left(r_{\theta}(s,a,s') + \gamma \vsoft_r(s')\right) - \vsoft_r(s) \\
&= r_{\theta}\left(s,a, \transitiondist(s,a)\right) + \gamma \vsoft_r\left(\transitiondist(s,a)\right) - \vsoft_r(s)\,,
\end{align}
where we write $s' = \transitiondist(s,a)$ for the deterministic next-state. Restricting ourselves only to feasible transitions $(s,a,s')$, we can alternatively write:
\begin{equation}
f_{\theta}(s,a,s') = r(s,a,s') + \gamma \vsoft_r(s') - \vsoft_r(s)\,.
\end{equation}
That is, $f_{\theta}$ is $r_{\theta}$ shaped by potential function $\vsoft_r(s)$. Applying \cref{thm:soft-equivalent-shaping}, it follows that:
\begin{equation}
\asoft_{f_{\theta}}(s,a) = \asoft_{r_{\theta}}(s,a)\,
\end{equation}
but by assumption $f_{\theta}(s, a) = \asoft_{r_{\theta}}(s, a)$, so we have that $f_{\theta}(s,a)$ is idempotent under the advantage operator:
\begin{equation}
\asoft_{f_{\theta}}(s,a) = f_{\theta}(s,a)\,.
\end{equation}
Thus the optimal policy is $\policy^*_{f_{\theta}}(a_t \giv s_t) = \exp f_{\theta}(s_t, a_t) = \policy^*_{r_\theta}(a_t \giv s_t)$.
Substituting this expression into \cref{eqn:airl-logd-gradient} gives
\begin{equation}
\nabla_{\theta} \log \discriminator_{\theta}(s_t, a_t) = \frac{1}{2}\nabla_{\theta} f_{\theta}(s_t, a_t)\,,
\end{equation}
and into \cref{eqn:airl-logminusd-gradient} gives
\begin{equation}
\nabla_{\theta} \log \left(1 - \discriminator_{\theta}(s_t, a_t)\right)) = -\frac{1}{2}\nabla_{\theta} f_{\theta}(s_t, a_t)\,.
\end{equation}

So:
\begin{align}
-2 \nabla_{\theta} \discriminatorloss(\theta) &= \expect_{\expertdataset}\left[\sum_{t=0}^T \nabla_{\theta} f_{\theta}(S_t, A_t)\right] - \expect_{\policy{}_{\theta}}\left[\sum_{t=0}^T \nabla_{\theta} f_{\theta}(S_t, A_t)\right]\,.
\end{align}
\end{proof}

\begin{remark}
\cref{sec:airl:advantage-recovery} showed the globally optimal $f_{\theta}$ is the optimal soft advantage function. 
However, there is no guarantee that $f_{\theta}$ is ever a soft advantage function during training.
So this theorem does not demonstrate convergence, but does provide intuition for why AIRL often works well in practice.
\end{remark}

\subsubsection{Recovering rewards}

In \cref{sec:airl:shaping}, we saw that if a reward function $r'$ is a potential shaped version of $r$ then $r'$ induces the same soft $Q$-values as $r$ up to a state-only function.
In the case that both reward functions are state-only, i.e.\ $r(s)$ and $r'(s)$, then potential shaping (when $\discount < 1)$ reduces to the special case of $r'(s) = r(s) + k$ for some constant $k$.
Perhaps surprisingly, AIRL can determine state-only rewards up to a constant provided the (deterministic) transition dynamics $\transitiondist$ satisfies a strong requirement known as the \emph{decomposability condition}.

\begin{defn}[Decomposability Condition]
We define two states $u, v \in \statespace$ as being \emph{$1$-step linked} under a transition distribution $\transitiondist(s' \giv s,a)$ if there exists a state $s \in \statespace$ and actions $a, b \in \actionspace$ such that $u$ and $v$ are successor states to $s$: i.e. $\transitiondist(u \giv s, a) > 0$ and $\transitiondist(v \giv s, b) > 0$. 

We define two states $u, v \in \statespace$ as being $n+1$-step \emph{linked} if they are $n$-step linked or if there is an intermediate state $s \in \statespace$ such that $u$ is $n$-step linked to $s$ and $s$ is $1$-step linked to $v$.

We define two states $u, v \in \statespace$ as being \emph{linked} if there is some $n \in \mathbb{N}$ for which they are $n$-step linked.

A transition distribution $\transitiondist$ is \emph{decomposable} if all pairs of states in the MDP are linked.
\end{defn}

The decomposability condition can be counterintuitive, so we consider some examples before using the definition further. Note that although the decomposability condition is stated in terms of transition probabilities, later applications of this condition will assume deterministic dynamics where probabilities are either $0$ or $1$.

A simple MDP that does \textbf{not} satisfy the condition is a two-state cyclic MDP, where it is only possible to transition from state $A$ to $B$ and vice-versa. 
There is no state that can reach both $A$ and $B$, so they are not $1$-step linked.
They are therefore also not $n$-step linked for any $n$, since there are no possible intermediary states.
However, the MDP would be decomposable if the dynamics were extended to allow self-transitions (from $A \to A$ and $B \to B$).

A similar pattern holds in gridworlds. 
Imagine a checkerboard pattern on the grid.
If all actions move to an adjacent cell (left, right, up or down), then all the successors of white cells are black, and vice-versa.
Consequently, cells are only ever $1$-step linked to cells of the same colour.
Taking the transitive closure, all cells of the same colour are linked together, but never to cells of a different colour.
However, if ones adds a `stay' action to the gridworld then all cells are linked.

We have not been able to determine whether the decomposability condition is satisfied in standard RL benchmarks, such as MuJoCo tasks or Atari games.

We will now show a key application of the decomposability condition: that equality of soft optimal policies implies equality of state-only rewards up to a constant.

\begin{theorem}
Let $\transitiondist$ be a \textbf{deterministic} dynamics model satisfying the decomposability condition, and let $\discount > 0$. Let $r(s)$ and $r'(s)$ be two reward functions producing the same MCE policy in $\transitiondist$. That is, for all states $s \in \statespace$ and actions $a \in \actionspace$:
\begin{equation}
\label{eq:airl:decomposable:advantage-equal}
\asoft_{r',\transitiondist}(s,a) = \asoft_{r,\transitiondist}(s,a)\,.
\end{equation}
Then $r'(s) = r(s) + k$, for some constant $k$.
\end{theorem}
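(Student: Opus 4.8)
The plan is to turn the hypothesis $\asoft_{r',\transitiondist}(s,a) = \asoft_{r,\transitiondist}(s,a)$ into a collection of linear relations between reward and value differences along transitions, and then use the decomposability condition to collapse all of these relations to a single constant. The soft value functions here are well-defined by the convergence result noted earlier.

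First I would write out the soft advantage under deterministic dynamics. Writing $s' = \transitiondist(s,a)$ for the unique successor and treating $r$ as a function of the current state, the soft Bellman equations give $\asoft_{r,\transitiondist}(s,a) = r(s) + \discount \vsoft_{r,\transitiondist}(s') - \vsoft_{r,\transitiondist}(s)$, and likewise for $r'$. Define $\Delta r(s) = r'(s) - r(s)$ and $\Delta V(s) = \vsoft_{r',\transitiondist}(s) - \vsoft_{r,\transitiondist}(s)$. Subtracting the two advantage identities, the hypothesis becomes $\Delta r(s) - \Delta V(s) = -\discount\,\Delta V(\transitiondist(s,a))$, which must hold for every state $s$ and every action $a$.

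The key observation is that the left-hand side does not depend on $a$, so $\discount\,\Delta V(\transitiondist(s,a))$ is the same for every action available at $s$; since $\discount > 0$, this forces $\Delta V$ to take a common value on all one-step successors of $s$. In the terminology of the decomposability condition, this says exactly that $\Delta V(u) = \Delta V(v)$ whenever $u$ and $v$ are $1$-step linked. A routine induction on $n$ (splitting an $(n+1)$-step link through its intermediate state, and invoking the inductive hypothesis on the two shorter links) then shows that $\Delta V$ is constant on every pair of linked states. Because $\transitiondist$ is decomposable, all pairs of states are linked, so $\Delta V \equiv c$ for some constant $c$.

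Substituting back, with $\Delta V$ constant the relation $\Delta r(s) - \Delta V(s) = -\discount\,\Delta V(\transitiondist(s,a))$ reduces to $\Delta r(s) = c - \discount c = (1-\discount)c$, which is independent of $s$. Hence $r'(s) = r(s) + k$ with $k = (1-\discount)c$, as claimed. I expect the main obstacle to be bookkeeping rather than conceptual: namely, writing the transition-to-value substitution cleanly and keeping the state-only reward convention (and the handling of $\vsoft$ at absorbing states) consistent with the rest of the paper. Once that is pinned down, the structural argument via decomposability is mechanical, and one sees clearly that the assumption $\discount > 0$ is precisely what lets us pass from ``$\discount\,\Delta V$ agrees on successors'' to ``$\Delta V$ agrees on successors.''
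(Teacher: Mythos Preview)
Your proposal is correct and follows essentially the same route as the paper: both expand the soft advantage via the Bellman relation, form the difference of value functions (your $\Delta V$ is the paper's $-f$), observe that action-independence of the state-only side forces this difference to agree on all $1$-step linked successors, and then use decomposability plus $\discount>0$ to make it globally constant and read off $r'(s)=r(s)+(\,1-\discount\,)c$. The only cosmetic difference is that the paper first carries the computation for general $(s,a,s')$-rewards and stochastic $\transitiondist$ before specialising, whereas you specialise to deterministic, state-only rewards from the outset.
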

\begin{proof}
We start by considering the general case of a stochastic dynamics model $\transitiondist$ and reward functions over $(s,a,s')$ triples. We introduce the simplifying assumptions only when necessary, to highlight why we make these assumptions.

Substituting the definition for $\asoft$ in \cref{eq:airl:decomposable:advantage-equal}:
\begin{equation}
\qsoft_{r',\transitiondist}(s,a) - \vsoft_{r',\transitiondist}(s) = \qsoft_{r,\transitiondist}(s,a) - \vsoft_{r,\transitiondist}(s)\,.
\end{equation}
So:
\begin{equation}
\qsoft_{r',\transitiondist}(s,a) = \qsoft_{r,\transitiondist}(s,a) - f(s)\,,
\end{equation}
where $f(s) = \vsoft_{r,\transitiondist}(s) - \vsoft_{r',\transitiondist}(s)$. Now:
\begin{align}
\qsoft_{r',\transitiondist}&(s,a) = \qsoft_{r,\transitiondist}(s,a) - f(s) \nonumber \\
&= \expect_{\transitiondist}\left[r(s, a, S') - f(s) + \discount \log \sum_{a'} \exp\qsoft_{r,\transitiondist}(S', a') \giv[\middle]{} s, a\right] \\
&= \expect_{\transitiondist}\vast[r(s, a, S') - f(s) + \discount f(S') + \discount \log \sum_{a'} \exp\left(\qsoft_{r,\transitiondist}(S', a') - f(S')\right) \giv[\vast]{} s, a\vast] \\
&= \expect_{\transitiondist}\left[r(s, a, S') - f(s) + \discount f(S') + \discount \log \sum_{a'} \exp\qsoft_{r',\transitiondist}(S', a') \giv[\middle]{} s, a\right]\,.
\end{align}

Contrast this with the Bellman backup on $r'$:
\begin{equation}
\qsoft_{r',\transitiondist}(s,a) = \expect_{\transitiondist}\left[r'(s, a, S') + \discount \log \sum_{a'} \exp\qsoft_{r',\transitiondist}(S', a') \giv[\middle]{} s, a\right]\,.
\end{equation}
So, equating these two expressions for $\qsoft_{r',\transitiondist}(s,a)$:
\begin{equation}
\expect_{\transitiondist}\left[r'(s, a, S') \giv[\middle]{} s, a\right] = \expect_{\transitiondist}\left[r(s, a, S') - f(s) + \discount f(S') \giv[\middle]{} s, a \right]\,.
\end{equation}

In the special case of deterministic dynamics, then if $s' = \transitiondist(s, a)$, we have:
\begin{equation}
r'(s, a, s') = r(s, a, s') - f(s) + \discount f(s')\,.
\end{equation}
This looks like $r'$ being a potential-shaped version of $r$, but note this equality may not hold for transitions that are not feasible under this dynamics model $\transitiondist$.

If we now constrain $r'$ and $r$ to be state-only, we get:
\begin{equation}
r'(s) - r(s) + f(s) = \discount f(s')\,.
\end{equation}
In particular, since $\discount \neq 0$ this implies that for a given state $s$, all possible successor states $s'$ (reached via different actions) must have the same value $f(s')$. In other words, all $1$-step linked states have the same $f$ value. Moreover, as $2$-step linked states are linked via a $1$-step linked state and equality is transitive, they must also have the same $f$ values. By induction, all linked states must have the same $f$ value. Since by assumption $\transitiondist$ is decomposable, then $f(s) = c$ for some constant, and so:
\begin{equation}
r'(s) = r(s) + (\discount - 1)c = r(s) + k\,.
\end{equation}
\end{proof}

Now, we will see how the decomposability condition allows us to make inferences about equality between state-only functions. This will then allow us to prove that AIRL can recover state-only reward functions.

\begin{lemma}
\label{lemma:airl:decomposable}
Suppose the transition distribution $\transitiondist$ is decomposable. Let $a(s), b(s), c(s), d(s)$ be functions of the state. Suppose that for all states $s \in \statespace$, actions $a \in \actionspace$ and successor states $s' \in \statespace$ for which $\transitiondist(s' \giv s,a) > 0$, we have
\begin{equation}
\label{eq:airl:lemma-decomposable-assumption}
a(s) - c(s) = b(s') - d(s')\,.
\end{equation}
Then for all $s \in \statespace$,
\begin{align}
a(s) &= c(s) + k \\
b(s) &= d(s) + k\,,
\end{align}
where $k \in \mathbb{R}$ is a constant.
\end{lemma}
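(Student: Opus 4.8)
The plan is to reduce the statement to two auxiliary state functions and show each is constant. Define $g(s) = a(s) - c(s)$ and $h(s) = b(s) - d(s)$. Then the hypothesis \cref{eq:airl:lemma-decomposable-assumption} says exactly that $g(s) = h(s')$ whenever $\transitiondist(s' \giv s, a) > 0$ for some action $a$. The goal thus becomes: show that $g$ and $h$ are both equal to a single constant $k$.

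First I would observe that all successors of a fixed state share the same $h$-value: if $s'$ and $s''$ are each reachable from $s$ (via possibly different actions), then $h(s') = g(s) = h(s'')$. In particular, if $u$ and $v$ are $1$-step linked through a common predecessor $s$, then $h(u) = g(s) = h(v)$. I would then promote this to all linked pairs by induction on the ``$n$-step linked'' relation: in the inductive step, $u$ is $n$-step linked to some intermediate state $s$ (so $h(u) = h(s)$ by the inductive hypothesis) and $s$ is $1$-step linked to $v$ (so $h(s) = h(v)$ by the base case), hence $h(u) = h(v)$. Since $\transitiondist$ is decomposable, every pair of states is linked, so $h$ is constant; call this constant $k$, so that $h(s) = k$ for all $s$.

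Finally I would push this back onto $g$: every state $s$ has at least one successor $s'$ — for any action $a$, $\transitiondist(\cdot \giv s, a)$ is a probability distribution on $\statespace$, hence has nonempty support — and for such an $s'$ we have $g(s) = h(s') = k$. Thus $g(s) = k$ as well, which unpacks to $a(s) = c(s) + k$ and $b(s) = d(s) + k$ for all $s$, with the same constant $k$ appearing in both equations.

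The only real subtlety, and the step I would be most careful with, is the induction over the ``$n$-step linked'' relation: one must match the recursive definition of ``linked'' exactly and use the fact that $1$-step linking is witnessed by a common \emph{predecessor}, which is precisely what forces equal $h$-values; everything else is bookkeeping. I would also note explicitly that the argument uses nothing about $\transitiondist$ being deterministic — only that it is decomposable and that the action set is nonempty — which is why the lemma is stated for general (possibly stochastic) decomposable dynamics.
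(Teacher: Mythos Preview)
Your proposal is correct and follows essentially the same route as the paper: define the two difference functions, use the base case that a common predecessor forces equal $h$-values on its successors, extend by induction along the $n$-step linking relation to conclude $h$ is constant on a decomposable MDP, and then use that every state has a successor to deduce $g$ equals the same constant. Aside from variable names (the paper uses $f,g$ where you use $g,h$) and your extra remarks on why successors exist and why determinism is unnecessary, the arguments are identical.
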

\begin{proof}
Let $f(s) = a(s) - c(s)$ and $g(s') = b(s') - d(s')$. 

\textbf{Base case}: Since $f(s) = g(s')$ for any successor state $s'$ of $s$, it must be that $g(s')$ takes on the same value for all successor states $s'$ for $s$. This shows that all $1$-step linked states have the same $g(s')$.

\textbf{Inductive case}: Moreover, this extends by transitivity. Suppose that all $n$-step linked states $s'$ have the same $g(s')$. Let $u$ and $v$ be $(n+1)$-step linked.
So $u$ is $n$-step linked to some intermediate $s \in \statespace$ that is $1$-step linked to $v$. But then $g(u) = g(s) = g(v)$. So in fact all $(n+1)$-step linked states $s'$ have the same $g(s')$.

By induction, it follows that all linked states $s'$ have the same $g(s')$. In a decomposable MDP, all states are linked, so $g(s)$ is equal to some constant $k \in \mathbb{R}$ for all $s \in \statespace$.\footnote{Note that in a decomposable MDP all states are the successor of at least one state.} Moreover, in any (infinite-horizon) MDP any state $s \in \statespace$ must have at least one successor state $s' \in \statespace$ (possibly itself). By assumption, $f(s) = g(s')$, so $f(s) = k$ for all $s \in \statespace$.
\end{proof}

Finally, we can use the preceding result to show when AIRL can recover a state-only reward (up to a constant). Note that even if the ground-truth reward $r(s)$ is state-only, the reward \emph{network} $f$ in AIRL must be a function of states \emph{and} actions (or next states) in order to be able to represent the global minimum: the soft advantage function $\asoft(s,a)$. The key idea in the following theorem is to decompose $f$ into a state-only reward $g_\theta(s)$ and a potential shaping term. This gives $f$ capacity to represent $\asoft(s,a)$, while the structure ensures $g_{\theta}(s)$ equals the ground-truth reward up to a constant.

\begin{theorem}
Suppose the reward network is parameterized by
\begin{equation}
f_{\theta,\phi}(s, a, s') = g_{\theta}(s) + \discount h_{\phi}(s') - h_{\phi}(s)\,.
\end{equation}
Suppose the ground-truth reward is state-only, $r(s)$. Suppose moreover that the MDP has deterministic dynamics $\transitiondist$ satisfying the decomposability condition, and that $\discount > 0$. Then if $f_{\theta^*,\phi^*}$ is the reward network for a global optimum $(D_{\theta^*,\phi^*}, \pi)$ of the AIRL problem, that is $f_{\theta^*,\phi^*}(s,a,s') = f^*(s, a, s')$, we have:
\begin{equation}
g_{\theta^*}(s) = r(s) + k_1, \quad h_{\phi^*}(s) = \vsoft_r(s) + k_2\,,
\end{equation}
where $k_1, k_2 \in \mathbb{R}$ are constants and $s \in \statespace$ is a state.
\end{theorem}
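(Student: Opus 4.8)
The plan is to substitute the characterisation of the globally optimal reward network obtained in \cref{sec:airl:advantage-recovery}, namely $f^*(s,a) = \asoft_{r}(s,a)$, and then exploit the assumed decomposition $f_{\theta,\phi}(s,a,s') = g_\theta(s) + \discount h_\phi(s') - h_\phi(s)$ together with the decomposability condition to pin down $g_{\theta^*}$ and $h_{\phi^*}$ individually. First I would recall that in a deterministic MDP with a state-only ground-truth reward $r(s)$, the soft advantage satisfies $\asoft_{r,\transitiondist}(s,a,s') = r(s) + \discount \vsoft_r(s') - \vsoft_r(s)$ on every feasible transition $(s,a,s')$ — this is exactly the computation performed in the proof of the preceding discriminator-objective theorem, where $s' = \transitiondist(s,a)$. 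Combining this with the hypothesis $f_{\theta^*,\phi^*} = f^*$ gives, for every feasible $(s,a,s')$,
\[
g_{\theta^*}(s) + \discount h_{\phi^*}(s') - h_{\phi^*}(s) = r(s) + \discount \vsoft_r(s') - \vsoft_r(s)\,.
\]
Writing $p(s) \triangleq h_{\phi^*}(s) - \vsoft_r(s)$ and $q(s) \triangleq g_{\theta^*}(s) - r(s)$, this rearranges to $q(s) = p(s) - \discount\, p(s')$ on every feasible transition.

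Next I would observe that, for a fixed $s$, the term $\discount\, p(s')$ must take the same value for every successor $s'$ of $s$, since $q(s)$ and $p(s)$ are independent of the action taken; as $\discount > 0$, this forces $p$ to be constant on every pair of states that are $1$-step linked through a common predecessor. I would then apply the transitive-closure induction of \cref{lemma:airl:decomposable} (or invoke that lemma directly with $a=h_{\phi^*}$, $c=\vsoft_r$, and appropriate $b,d$): because $\transitiondist$ is decomposable, all states are linked, so $p \equiv k_2$ for a constant $k_2$, i.e.\ $h_{\phi^*}(s) = \vsoft_r(s) + k_2$. Feeding this back into $q(s) = p(s) - \discount\, p(s') = (1-\discount)k_2 =: k_1$ shows $q$ is constant, hence $g_{\theta^*}(s) = r(s) + k_1$, completing the proof.

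The argument is largely bookkeeping, so I do not expect a serious obstacle; the one genuinely load-bearing point is that the equality $f_{\theta^*,\phi^*} = f^*$ is guaranteed only on \emph{feasible} transitions, so a priori $p$ need only be constant along reachable successor sets. Bridging this local constraint to a single global constant is precisely what the decomposability condition buys us, and this is also where the hypotheses $\discount > 0$ and determinism become essential (if $\discount = 0$ the step fixing $p$ on successors collapses, and nondeterminism would only give an averaged identity over successors). I would additionally verify the edge case noted in the footnote to \cref{lemma:airl:decomposable} — that every state has at least one successor and is itself the successor of some state — so that the conclusion "all linked states share the same $p$'' genuinely covers all of $\statespace$.
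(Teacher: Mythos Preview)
Your proposal is correct and matches the paper's proof almost exactly: equate the decomposed $f_{\theta^*,\phi^*}$ with $\asoft_r(s,a) = r(s)+\discount\vsoft_r(s')-\vsoft_r(s)$ on feasible transitions, then use decomposability (via \cref{lemma:airl:decomposable} or its inlined induction) to force $h_{\phi^*}-\vsoft_r$ to a single constant, and back-substitute for $g_{\theta^*}$. The only nit is that your parenthetical instantiation $a=h_{\phi^*}$, $c=\vsoft_r$ for the lemma does not separate the $s$- and $s'$-dependence as the lemma's hypothesis requires---the paper instead takes $a(s)=g_{\theta^*}(s)-h_{\phi^*}(s)$, $c(s)=r(s)-\vsoft_r(s)$, $b(s')=\discount\vsoft_r(s')$, $d(s')=\discount h_{\phi^*}(s')$---but your direct linked-state induction on $p$ is correct regardless.
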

\begin{proof}
We know the global minimum $f^*(s, a, s') = \asoft(s, a)$, from \cref{sec:airl:advantage-recovery}. Now:
\begin{align}
\asoft(s, a) &= \qsoft_r(s, a) - \vsoft_r(s) \\
&= r(s) + \discount \vsoft_r(s') - \vsoft_r(s)\,.
\end{align}
So for all states $s \in \statespace$, actions $a \in \actionspace$ and resulting deterministic successor states $s' = \transitiondist(s, a)$ we have:
\begin{align}
&g_{\theta^*}(s) + \discount h_{\phi^*}(s') - h_{\phi^*}(s) = r(s) + \discount \vsoft_r(s') - \vsoft_r(s) \\
\iff &\left(g_{\theta^*}(s) - h_{\phi^*}(s)\right) - \left(r(s) - \vsoft_r(s)\right) = \discount \vsoft_r(s') - \discount h_{\phi^*}(s')\,.
\end{align}
Now applying \cref{lemma:airl:decomposable} with $a(s) = g_{\theta^*}(s) - h_{\phi^*}(s)$, $c(s) = r(s) - \vsoft_r(s)$, $b(s') = \discount \vsoft_r(s')$ and $d(s') = \discount h_{\phi^*}(s')$ gives:
\begin{align}
g_{\theta^*}(s) - h_{\phi^*}(s) &= a(s) = c(s) + k_0 = r(s) - \vsoft_r(s) + k_0 \\
\discount h_{\phi^*}(s') &= d(s') = b(s') + k_0 = \discount \vsoft_r(s') + k_0\,,
\end{align}
where $k_0 \in \mathbb{R}$ is a constant. Rearranging and using $\discount \neq 0$ we have:
\begin{align}
\label{eq:airl:g-equals-shapedr}
g_{\theta^*}(s) &= r(s) + h_{\phi^*}(s) - \vsoft_r(s) + k_0 \\
\label{eq:airl:h-equals-vsoft}
h_{\phi^*}(s') &= \vsoft_r(s') + \frac{k_0}{\discount} = \vsoft_r(s') + k_2\,.
\end{align}
By the decomposability condition, all states $s \in \statespace$ are the successor of some state (possibly themselves). So we can apply \cref{eq:airl:h-equals-vsoft} to all $s \in \statespace$:
\begin{equation}
\label{eq:airl:h-equals-vsoft-general}
h_{\phi^*}(s) = \vsoft_r(s) + k_2\,.    
\end{equation}
Finally, applying \cref{eq:airl:h-equals-vsoft-general} to \cref{eq:airl:g-equals-shapedr} yields:
\begin{equation}
g_{\theta^*}(s) = r(s) + (k_0 + k_2) = r(s) + k_1\,,
\end{equation}
as required.
\end{proof}

\begin{remark}
Note this theorem makes several strong assumptions. In particular, it requires that $f$ attains the global minimum, but AIRL is not in general guaranteed to converge to a global optimum. Additionally, many environments have stochastic dynamics or are not 1-step linked.

Note that in stochastic dynamics there may not exist any function $f(s, s')$ that is always equal to $\asoft(s, a)$. This is because there may exist $a, a'$ such that $\transitiondist(\cdot \giv s, a)$ and $\transitiondist(\cdot \giv s, a')$ differ but both have support for $s'$ while $\asoft(s, a) \neq \asoft(s, a')$ .
\end{remark}

\section{Conclusion}
We have described three Inverse Reinforcement Learning (IRL) algorithms: the tabular method Maximum Causal Entropy (MCE) IRL; and deep-learning based, dynamics-free algorithms Guided Cost Learning (GCL) and Adversarial IRL (AIRL).
We have shown that MCE IRL can be derived from maximising entropy under a feature expectation matching constraint.
Furthermore, we have shown how this is equivalent to maximising the likelihood of the data.
Finally, we have explained how GCL and AIRL can both be viewed as extensions of this maximum likelihood solution to settings with unknown dynamics and potentially continuous state and action spaces.

While contemporary methods such as GCL and AIRL have many points in common with MCE IRL, the connection is far from exact.
For example, the discriminator objective of AIRL only aligns with that of MCE IRL in undiscounted MDPs, yet it is routinely applied to discounted MDPs.
One promising direction for future work would be to derive a dynamics-free algorithm using function approximation directly from the original MCE IRL approach.
We consider it probable that such an algorithm would provide more stable performance than existing heuristic approximations to MCE IRL.

\subsection*{Acknowledgements}
We would like to thank Alyssa Li Dayan, Michael Dennis, Yawen Duan, Daniel Filan, Erik Jenner, Niklas Lauffer and Cody Wild for feedback on earlier versions of this manuscript.

\bibliographystyle{plainnat}
\bibliography{citations}{}

\end{document}